\tikzset{separator/.style={dashed,white}}
\theoremstyle{plain}
\newtheorem{theorem}{Theorem}[section]
\newtheorem*{theorem*}{Theorem}
\newtheorem{corollary}[theorem]{Corollary}
\newtheorem{conjecture}[theorem]{Conjecture}
\theoremstyle{definition}
\newtheorem{definition}[theorem]{Definition}
\newtheorem*{definition*}{Definition}
\newtheorem*{example*}{Example}
\newtheorem*{remark*}{Remark}
\newtheorem{question}[theorem]{Question}
\newtheorem{claim}[theorem]{Claim}
\newtheorem{lemma}[theorem]{Lemma}
\title{Backdoor defense, learnability and obfuscation}
\author{Paul Christiano\thanks{Work done while at the Alignment Research Center prior to April 2024.}}
\author{Jacob Hilton\thanks{Corresponding author: \href{mailto:jacob@alignment.org}{\nolinkurl{jacob@alignment.org}}}}
\author{Victor Lecomte}
\author{Mark Xu\thanks{Authors ordered alphabetically.}}
\affil{Alignment Research Center}
\date{}
\begin{document}

\maketitle

\begin{abstract}
We introduce a formal notion of defendability against backdoors using a game between an attacker and a defender. In this game, the attacker modifies a function to behave differently on a particular input known as the ``trigger'', while behaving the same almost everywhere else. The defender then attempts to detect the trigger at evaluation time. If the defender succeeds with high enough probability, then the function class is said to be defendable. The key constraint on the attacker that makes defense possible is that the attacker's strategy must work for a randomly-chosen trigger.

Our definition is simple and does not explicitly mention learning, yet we demonstrate that it is closely connected to learnability. In the computationally unbounded setting, we use a voting algorithm of \citet{hanneke} to show that defendability is essentially determined by the VC dimension of the function class, in much the same way as PAC learnability. In the computationally bounded setting, we use a similar argument to show that efficient PAC learnability implies efficient defendability, but not conversely. On the other hand, we use indistinguishability obfuscation to show that the class of polynomial size circuits is not efficiently defendable. Finally, we present polynomial size decision trees as a natural example for which defense is strictly easier than learning. Thus, we identify efficient defendability as a notable intermediate concept in between efficient learnability and obfuscation.
\end{abstract}

\section{Introduction}

A \textit{backdoor} in a machine learning model is a modification to the model that causes it to behave differently on certain inputs that activate a secret ``trigger''. There is a wide literature on backdoor attacks and defenses from both a theoretical and an empirical perspective \citep{survey}. However, prior theoretical work typically makes reference to a particular training dataset, leading to a focus on data poisoning attacks. In this work we introduce a formal notion of a backdoor that allows the attacker to modify the model arbitrarily. Our definition is simple, but nevertheless gives rise to a rich array of strategies incorporating both learning and obfuscation.

Our formal notion is focused on backdoor detection \textit{at runtime}, meaning that the defender is given a particular input and must flag whether or not it activates the backdoor trigger. We focus on this case because of the existence of \textit{undetectable} backdoors if the defender is instead required to flag a model as backdoored without being given any particular input \citep*{goldwasser}. Moreover, detection at runtime is sufficient in threat models where the attacker is given only one opportunity to modify the model, being akin to giving the defender an additional chance to modify the model after the attacker. We also focus on the white-box setting, in which the defender has access to a complete description of the model.

\pagebreak[3]

Our main contributions, which are also summarized in Figures \ref{figure:definition} and \ref{figure:results}, are as follows:

\begin{figure}[t]
\makebox[\linewidth]{
\begin{tikzpicture}[box/.style={draw, rectangle, rounded corners=5pt, minimum height=5cm, text depth=5cm, align=left, anchor=north}]
\node[box, text width=3.5cm] at (0,0) {\vspace{1cm}\\\begin{enumerate}[leftmargin=*]\item Attacker chooses:\begin{itemize}[leftmargin=*]\item distribution $\mathcal D$\item original function\\$f\in\mathcal F$\end{itemize}\end{enumerate}};
\node[anchor=north] at (0,-0.1) {\includesvg[width=1cm, height=1cm]{images/swords.svg}};
\node[box, text width=3.6cm] at (4,0) {\vspace{1cm}\\\begin{enumerate}[leftmargin=*]\setcounter{enumi}{1}\item Chosen randomly:\begin{itemize}[leftmargin=*]\item backdoor trigger\\$x^\ast\sim\mathcal D$\end{itemize}\end{enumerate}};
\node[anchor=north] at (4,-0.1) {\includesvg[width=1cm, height=1cm]{images/questionmark.svg}};
\node[box, text width=4.4cm] at (8.45,0) {\vspace{1cm}\\\begin{enumerate}[leftmargin=*]\setcounter{enumi}{2}\item Attacker chooses:\begin{itemize}[leftmargin=*]\item backdoored function\\$f^\ast\in\mathcal F$ with\end{itemize}\end{enumerate}\vspace{-0.4cm}\begin{gather*}\mathbb P_{x\sim\mathcal D}\left(f^\ast\left(x\right)\neq f\left(x\right)\right)\leq\varepsilon\\\text{but}\quad f^\ast\left(x^\ast\right)\neq f\left(x^\ast\right)\end{gather*}};
\node[anchor=north] at (8.45,-0.1) {\includesvg[width=1cm, height=1cm]{images/swords.svg}};
\node[box, text width=2.75cm] at (12.475,0) {\vspace{1cm}\\\begin{enumerate}[leftmargin=*]\setcounter{enumi}{3}\item Defender distinguishes:\end{enumerate}\vspace{-0.2cm}\begin{gather*}\left(f,x\sim\mathcal D\right)\\\text{from}\\\left(f^\ast,x^\ast\right)\end{gather*}};
\node[anchor=north] at (12.475,-0.1) {\includesvg[width=1cm, height=1cm]{images/shield.svg}};
\end{tikzpicture}
}
\caption{The game used to define $\varepsilon$-defendability for a class $\mathcal F$ of $0,1$-valued functions.}
\label{figure:definition}
\end{figure}

\begin{itemize}
\item
\textbf{Definition of $\varepsilon$-defendability (Section~\ref{section:definition}).} We introduce a simple game between an attacker and a defender to define what it means for a representation class to be \textit{$\varepsilon$-defendable} with a certain confidence. The only constraint placed on the attacker is that their backdoor strategy must work for a randomly-chosen trigger. Without this assumption, there is a symmetry between the original and backdoored functions that makes defense impossible.
\item
\textbf{Statistical defendability (Section~\ref{section:unbounded}).} In the absence of computational constraints on the defender, we show that a representation class $\mathcal F$ is $\varepsilon$-defendable with confidence tending to $1$ if and only if $\varepsilon=o\left(\nicefrac 1{\operatorname{VC}\left(\mathcal F\right)}\right)$, where $\operatorname{VC}\left(\mathcal F\right)$ is the Vapnik--Chervonenkis dimension of $\mathcal F$. We achieve this by adapting a result of \citet{hanneke} that applies a learning algorithm multiple times and takes a majority vote.
\item
\textbf{Computational defendability (Section~\ref{section:bounded}).} We introduce a notion of \textit{efficient defendability} in which the defender's detection strategy must run in polynomial time. We show that efficient PAC learnability implies efficient defendability, but not conversely. We also show that under certain cryptographic assumptions, the class of polynomial size circuits is not efficiently defendable, by combining a puncturable pseudorandom function with an efficient indistinguishability obfuscator.
\item
\textbf{Defendability of decision trees (Section~\ref{section:trees}).} In the setting where the distribution over inputs is uniform, we give a defense for polynomial size decision trees that runs in the time taken to evaluate a single decision tree. This provides a natural example for which defense is faster than learning.
\end{itemize}

We conclude in Section~\ref{section:discussion} with a discussion of whether efficient defendability can be further separated from efficient PAC learnability, so-called ``mechanistic'' defenses that resemble our defense for polynomial size decision trees, and the implications of our results for AI alignment.

\begin{figure}[t]
\makebox[\linewidth]{
\begin{subfigure}[t]{0.33\textwidth}
\centering
\makebox[\linewidth]{
\begin{tikzpicture}
\draw[separator] (-7.55,-1.65) -- (-7.55,5.05);
\node[font=\bfseries,align=center,anchor=north] at (-5.15,5.75) {Statistical defendability\\(Section~\ref{section:unbounded})};
\node[align=center,font=\small] at (-5.15,1.9) {$\mathcal F$ is $\varepsilon$-defendable\\with confidence $\to 1$\\$\iff$\\$\varepsilon=o\left(\nicefrac 1{\operatorname{VC}\left(\mathcal F\right)}\right)$};
\node[align=center,font=\scriptsize] at (-5.15,0.9) {(by learning plus voting)};
\draw[separator] (-2.75,-1.65) -- (-2.75,5.05);
\end{tikzpicture}
}
\caption{Statistical defendability is concerned with computationally unbounded defenders, and is essentially determined by the VC dimension, in much the same way as statistical learnability.}
\end{subfigure}
\hspace{0.02\textwidth}
\begin{subfigure}[t]{0.35\textwidth}
\centering
\makebox[\linewidth]{
\begin{tikzpicture}
\draw[separator] (-2.75,-1.65) -- (-2.75,5.05);
\node[font=\bfseries,align=center,anchor=north] at (0,5.75) {Computational defendability\\(Section~\ref{section:bounded})};
\draw (0,0) ellipse (1.45cm and 1.25cm);
\node[align=center] at (0,0.5) {Efficiently\\PAC learnable};
\draw (0,0.9) ellipse (2.35cm and 2.35cm);
\node[align=center] at (0,2.6) {Efficiently\\defendable};
\node[circle,fill,inner sep=1.5pt] (random) at (0,1.75) {};
\node[anchor=west,align=center,font=\small] at (random.east) {$2^n$ random\\functions};
\node[circle,fill,inner sep=1.5pt] (circuits) at (0,4.1) {};
\node[anchor=west,align=center,font=\small] at (circuits.east) {Polynomial\\size circuits};
\node[anchor=west,align=center,font=\scriptsize,yshift=-20pt] at (circuits.east) {(by puncturing plus\\\hspace{0.75cm}obfuscation)};
\draw[separator] (2.75,-1.65) -- (2.75,5.05);
\end{tikzpicture}
}
\caption{Computational defendability is concerned with polynomial-time defenders, and the set of efficiently defendable classes lies strictly between the set of efficiently learnable classes and the set of all classes.}
\end{subfigure}
\hspace{0.02\textwidth}
\begin{subfigure}[t]{0.28\textwidth}
\centering
\makebox[\linewidth]{
\begin{tikzpicture}
\draw[separator] (2.75,-1.65) -- (2.75,5.05);
\node[font=\bfseries,align=center,anchor=north] at (4.95,5.75) {Defendability of\\decision trees\\(Section~\ref{section:trees})};
\node[align=center,font=\small] at (4.95,1.75) {Polynomial size\\decision trees\\are efficiently\\uniform-defendable\\in a single evaluation};
\draw[separator] (7.05,-1.65) -- (7.05,5.05);
\end{tikzpicture}
}
\caption{In the setting where the distribution over inputs is uniform, polynomial size decision trees are a natural class for which defense is faster than learning.}
\end{subfigure}
}
\caption{Summary of the results in this paper.}
\label{figure:results}
\end{figure}

\section{Related work}

The most closely related work to our own is that of \citet*{goldwasser}, whose main results use a digital signature scheme to insert an undetectable backdoor into a model. The backdoor trigger can be applied to any input by perturbing it appropriately. In the black-box setting, the backdoor is undetectable in the sense that it is computationally infeasible for the defender to find any input on which the backdoored model and the original model differ. In the white-box setting, the backdoor is undetectable in the sense that it is computationally infeasible for the defender to distinguish the backdoored model from the original model. However, both of these results are in the setting where the backdoor trigger must be extracted from the model rather than detected at runtime.

In the defense at runtime setting, the same authors also show that a backdoored model can be modified by the defender to produce a model that is not backdoored. Once again, the backdoor trigger is applied by perturbing the input. The result only applies to models that satisfy a certain smoothness assumption, and mirrors the use of randomized smoothing to achieve certified adversarial robustness by \citet{randomizedsmoothing}. By contrast, we avoid making smoothness assumptions, and further develop the theory of runtime backdoor detection.

Concurrent work by \citet*{oblivious} studies a black-box, non-realizable version of backdoor defense at runtime. Rather than predicting the exact label of the non-backdoored model on the given input (which is equivalent to our notion of runtime detection), the defender must instead produce a ``canonical'' label that has high accuracy on average, by using black-box access to the potentially backdoored model. In this local mitigation setting, they construct efficient defenders for almost-linear and almost-polynomial functions. They also consider the global mitigation setting, in which the defender must produce an entire clean model, constructing efficient global mitigators for Fourier-heavy functions.

Adversarial examples can be thought of as backdoor triggers, but the attacker must find similar inputs rather than a model with similar outputs. The gap between statistical and computational hardness has been explored in this context by \citet{advex1} and \citet{advex2}. Much like us, they find that defense is statistically possible but computationally hard.

Returning to backdoors, another thematically similar work to our own is that of \citet{rethinking}, which also questions the assumptions behind different backdoor attacks and defenses. However, it remains within the data-poisoning setting, in which the backdoor is inserted using corrupted training data.

A line of work beginning with \citet{semihandcrafted} goes beyond data-poisoning, inserting a backdoor into an already-trained model, but still using data to search for the backdoored model. More manual methods of backdoor insertion were later introduced by \citet{handcrafted}. This line of work is primarily empirical rather than theoretical.

Our result on statistical defendability is heavily based on the result of \citet{hanneke} in the realizable case. Following \citet{voting1} and \citet{voting2}, they use a voting algorithm to produce a backdoor defense in the data-poisoning setting. By combining this with a classical result of \citet*{hlw} on optimal learning, they relate the optimal performance of this algorithm to the VC dimension of the function class. Our result adapts this to our more general notion of defendability.

Our result on computational defendability of polynomial size circuits uses a hybrid argument that combines a puncturable pseudorandom function and an efficient indistinguishability obfuscator. This combination has been used in a number of previous hybrid arguments, as described by \citet{puncturableprfandio}. More generally, the relationship between learnability and cryptographic assumptions has a long history, with some basic constructions in this vein described in the textbook of \citet{computationallearningtheory}.

\section{Definition of \texorpdfstring{$\varepsilon$}{epsilon}-defendability}\label{section:definition}

Our formal notion of defendability against backdoors is based on a game played between two players, an ``attacker'' and a ``defender''. Informally, the attacker starts with a function and is given a randomly sampled backdoor trigger. They must choose a backdoored function that almost always agrees with the original function, but disagrees with the original function on the backdoor trigger. Then the defender is given either the original function and a random input, or the backdoored function and the backdoor trigger, and they must distinguish one from the other with high confidence.

For our formal definition, we fix a security parameter $n\in\mathbb N$ and consider classifiers $\mathcal X\to\left\{0,1\right\}$ where $\mathcal X=\left\{0,1\right\}^n$. Formally, a \textit{representation class} over $\mathcal X$ is a function $\left\{0,1\right\}^{R\left(n\right)}\to\left\{\text{all functions $\mathcal X\to\left\{0,1\right\}$}\right\}$ for some $R\left(n\right)\in\mathbb N$, but we generally keep the choice of representation implicit and conflate a representation class with the set of functions in its image. In our results on computationally constrained defenders, we also implicitly assume that $R\left(n\right)$ is polynomial in $n$.

Our definition also makes use of the \textit{example oracle} $\textsc{Ex}\left(f,\mathcal D\right)$ where $\mathcal D$ is a distribution over $\mathcal X$ and $f:\mathcal X\to\left\{0,1\right\}$. This is an oracle that outputs $\left(x,f\left(x\right)\right)$ for a randomly sampled $x\sim\mathcal D$.

\begin{definition}
Let $\mathcal F$ be a representation class over $\mathcal X=\left\{0,1\right\}^n$ and let $\varepsilon,\delta>0$. We say that $\mathcal F$ is \textit{$\varepsilon$-defendable with confidence $1-\delta$} if there is a probabilistic algorithm (the ``detection strategy'') that wins the following game with probability at least $1-\delta$:
\begin{itemize}
\item
An adversary with knowledge of the detection strategy (the ``attacker'') chooses a distribution $\mathcal D$ over $\mathcal X$ and $f\in\mathcal F$.
\item
We randomly sample $x^\ast\sim\mathcal D$ (the ``backdoor trigger''), and then the adversary chooses $f^\ast\in\mathcal F$ (the ``backdoored function'').
\item
The backdoored function $f^\ast$ is called \textit{$\varepsilon$-valid} if
\[\mathbb P_{x\sim\mathcal D}\left(f^\ast\left(x\right)\neq f\left(x\right)\right)\leq\varepsilon\quad\text{but}\quad f^\ast\left(x^\ast\right)\neq f\left(x^\ast\right).\]
\item
We randomly choose $\left(f^\prime,x^\prime\right)$ to pass to the detection strategy to be either $\left(f,x\right)$ for a random sample $x\sim\mathcal D$, or $\left(f^\ast,x^\ast\right)$, with 50\% probability each. The detection strategy is given access to the example oracle $\textsc{Ex}\left(f^\prime,\mathcal D\right)$ as well as the representation of $f^\prime$, $x^\prime$, $f^\prime\left(x^\prime\right)$, $\varepsilon$ and $\delta$ as input, and must output either $\textsc{Acc}$ (for ``accept'') or $\textsc{Rej}$ (for ``reject'').
\item
If $\left(f,x\right)$ is passed to the detection strategy, then the detection strategy wins if it outputs $\textsc{Acc}$. If $\left(f^\ast,x^\ast\right)$ is passed to the detection strategy, then the detection strategy wins if either it outputs $\textsc{Rej}$ or $f^\ast$ is not $\varepsilon$-valid.
\end{itemize}
\end{definition}

In this definition, the adversary is not specified by a collection of algorithms, but is instead shorthand for a universal quantifier. Thus we could have equivalently (but perhaps convolutedly) said, ``there exists a detection strategy such that, for all distributions $\mathcal D$ over $\mathcal X$ and all $f\in\mathcal F$, the expectation over $x^\ast\sim\mathcal D$ of the minimum over all $f^\ast\in\mathcal F$ of the win probability of the detection strategy is at least $1-\delta$''.

The manner in which $f$, $x^\ast$ and $f^\ast$ are chosen in this definition deserves some elaboration. If the attacker were allowed to choose the backdoor trigger $x^\ast$ themselves, then it would be too easy for the adversary to insert an undetectable backdoor. So we instead require the adversary to come up with a backdoor insertion strategy that works for a randomly-chosen trigger. In fact, we do not even allow the adversary to see the backdoor trigger when they are selecting the original function $f$, or else they could produce $f^\ast$ by \textit{removing an existing backdoor} rather than \textit{inserting a new one}. This is illustrated by the following example.

\begin{example*}[Existing backdoor removal]
Let $\mathcal X=\left\{0,1\right\}^n$ and let $\mathcal F=\left\{\mathbbm 1_x|\;x\in\mathcal X\right\}$, where $\mathbbm 1_x:\mathcal X\to\left\{0,1\right\}$ denotes the indicator function
\[\mathbbm 1_x\left(x^\prime\right)=\begin{cases}1,&\text{if $x^\prime=x$}\\0&\text{if $x^\prime\neq x$.}\end{cases}\]
Take $\mathcal D$ to be uniform over $\mathcal X$. If we allowed the adversary to choose both $f$ and $f^\ast$ after sampling $x^\ast\sim\mathcal D$, then they could take $f=\mathbbm 1_{x^\ast}$ and $f^\ast=\mathbbm 1_{x^\prime}$ for $x^\prime\sim\mathcal D$ (or in the event that $x^\prime=x^\ast$, instead take $f=\mathbbm 1_{x^{\prime\prime}}$ for some $x^{\prime\prime}\in\mathcal X\setminus\left\{x^\ast\right\}$ sampled uniformly at random). Then $f^\ast$ would be $\frac 2{2^n}$-valid, and $\left(f,x\right)$ and $\left(f^\ast,x^\ast\right)$ would be identically distributed for $x\sim\mathcal D$. So if $\varepsilon\geq\frac 1{2^{n-1}}$ then any detection strategy would win with probability at most $\frac 12$. This would be quite an extreme failure for the defender, especially considering that the VC dimension of $\mathcal F$ is only $1$.
\end{example*}

Thus the way in which $f$, $x^\ast$ and $f^\ast$ are chosen, by randomly sampling the backdoor trigger $x^\ast$ after the original function $f$ is chosen but before the backdoored function $f^\ast$ is chosen, is the key symmetry-breaking assumption that makes the game interesting.

\begin{remark*}
We could generalize this definition to include functions $\mathcal X\to\mathcal Y$ where $\mathcal Y$ is any subset of $\mathbb R$ (or even more generally, any metric space). In this setting, the most natural generalization of $\varepsilon$-validity is
\[\mathbb E_{x\sim\mathcal D}\left[\left(f^\ast\left(x\right)-f\left(x\right)\right)^2\right]\leq\varepsilon\quad\text{but}\quad \left|f^\ast\left(x^\ast\right)-f\left(x^\ast\right)\right|\geq 1.\]
\end{remark*}

\section{Statistical defendability}\label{section:unbounded}

In this section we will completely determine, up to a constant factor, the values of $\varepsilon$ and $\delta$ for which an arbitrary representation class is $\varepsilon$-defendable with confidence $1-\delta$, in the absence of any computational constraints on the detection strategy.

Our result is expressed in terms of the \textit{Vapnik--Chervonenkis (VC) dimension} of the representation class $\mathcal F$, denoted $\operatorname{VC}\left(\mathcal F\right)$. This is defined as the maximum size of a set $S$ of points \textit{shattered} by $\mathcal F$, meaning that all $2^{\left|S\right|}$ functions $S\to\left\{0,1\right\}$ are realized as the restriction $f\vert_S$ of some $f\in\mathcal F$.

\begin{restatable}{theorem}{restateunbounded}\label{theorem:unbounded}
Let $\mathcal F$ be a representation class over $\left\{0,1\right\}^n$ and let $\varepsilon>0$. Then the most (i.e., supremum) confidence with which $\mathcal F$ is $\varepsilon$-defendable is
\[\max\left(\tfrac 12,1-\Theta\left(\operatorname{VC}\left(\mathcal F\right)\varepsilon\right)\right)\]
as $\operatorname{VC}\left(\mathcal F\right)\to\infty$.
\end{restatable}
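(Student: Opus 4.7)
The theorem packages an upper and a lower bound on the supremum defender confidence, so the plan is to prove each direction separately, both leveraging the learning-plus-voting machinery of \citet{hanneke}.

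For the \emph{achievability} direction, I would first note that outputting each of $\textsc{Acc}$ and $\textsc{Rej}$ with probability $\tfrac12$ trivially gives confidence $\tfrac12$, so it only remains to exhibit a strategy with failure $O(\operatorname{VC}(\mathcal F)\varepsilon)$. The key observation is that in both worlds the defender faces, the samples from $\textsc{Ex}(f^\prime,\mathcal D)$ look like $\mathcal D$-labeled draws of the ``underlying'' $f$ with at most an $\varepsilon$-fraction of corrupted labels: world~1 is uncorrupted, and world~2 corrupts labels precisely where $f^\ast \neq f$. I would feed these samples to Hanneke's majority vote of ERMs on random sub-samples, producing a hypothesis $\hat f$ which, by adapting Hanneke's data-poisoning analysis, satisfies $\mathbb P_{y\sim\mathcal D}(\hat f(y)\neq f(y)) = O(\operatorname{VC}(\mathcal F)\varepsilon)$. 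The defender outputs $\textsc{Acc}$ iff $\hat f(x^\prime) = f^\prime(x^\prime)$: in world~1 this holds because $f^\prime(x^\prime)=f(x^\prime)$ and $x^\prime \sim \mathcal D$, while in world~2 it fails with the same probability because $x^\ast \sim \mathcal D$ forces $\hat f(x^\ast) = f(x^\ast) \neq f^\ast(x^\ast)$, yielding the desired rejection.

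For the \emph{hardness} direction, I would construct an explicit adversary. Let $d=\operatorname{VC}(\mathcal F)$, fix a shattered set $S\subseteq\mathcal X$ of size $d$, and for each $\sigma \in \{0,1\}^S$ pick a representative $\phi(\sigma)\in\mathcal F$ whose restriction to $S$ is $\sigma$. Choose $k = \min(d,\lfloor 1/\varepsilon\rfloor+1)$, and let $\mathcal D$ assign weight $\varepsilon$ to each of $k-1$ ``light'' points of $S$ and the residual mass to a single ``heavy'' point. The attacker samples $\sigma$ uniformly, sets $f = \phi(\sigma)$, and---conditional on $x^\ast$ being light---sets $f^\ast = \phi(\sigma \oplus \mathbbm 1_{x^\ast})$, which is $\varepsilon$-valid; if $x^\ast$ is heavy, the attacker plays $f^\ast = f$, which is not $\varepsilon$-valid and so the defender wins by definition. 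A direct calculation (using that $\sigma \oplus \mathbbm 1_{x^\ast}$ is uniform in $\{0,1\}^S$ for every fixed $x^\ast$) shows that, conditional on $x^\prime$ being light, the joint distribution of $(f^\prime, x^\prime)$ together with the i.i.d.\ oracle samples is identical in the two worlds, so no strategy beats $\tfrac12$ on this event. Since the light event has probability $(k-1)\varepsilon$, the defender's overall confidence is at most $1 - (k-1)\varepsilon/2 = \max(\tfrac12, 1 - \Theta(d\varepsilon))$.

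The main obstacle I anticipate is cleanly transferring Hanneke's PAC-style guarantee, which is phrased as an average error over a fresh test point, to our setting where $x^\ast$ is not independent of the training samples (the attacker chose $f^\ast$ after seeing $x^\ast$, so the labels in the oracle stream depend on $x^\ast$). I expect the voting argument to go through because the samples are still i.i.d.\ draws from $\mathcal D$ with at most an $\varepsilon$-fraction of flipped labels, so a typical sub-sample---whose expected number of corruptions is $O(\operatorname{VC}(\mathcal F))$---will, by an HLW-type bound, be fit correctly by the ERM on the bulk of the support of $\mathcal D$, and the majority vote washes out the remaining error. The hardness construction is more mechanical; the only subtlety is choosing $k$ so that the same attack interpolates between the $1 - \Theta(d\varepsilon)$ regime and the $\tfrac12$ regime as $d\varepsilon$ grows.
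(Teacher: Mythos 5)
Your hardness construction is in substance the paper's: a shattered set $S$, a distribution splitting mass between a few ``light'' points of $S$ and one ``heavy'' point, $f$ taken uniformly over witnesses (justified by the standard worst-case $\ge$ average-case reduction), $f^\ast$ obtained by flipping $f$ at the trigger when it is light and playing $f^\ast=f$ otherwise, and the observation that $(f',x')$ together with the example oracle is identically distributed in the two worlds conditional on a light trigger. The paper puts mass $\min\left(\tfrac{1}{d-1},\varepsilon\right)$ rather than exactly $\varepsilon$ on each light point, so that when $(d-1)\varepsilon\ge 1$ the heavy point has zero mass and the bound is exactly $\tfrac12$ rather than your $\tfrac12+\tfrac{\varepsilon}{2}$; this is a cosmetic difference.

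The achievability direction has the right skeleton (learn from the oracle, compare to $f'(x')$, vote), but the obstacle you flag at the end is the crux, and the reasoning you offer does not close it. The guarantee $\mathbb P_{y\sim\mathcal D}\left(\hat f(y)\neq f(y)\right)=O\left(d\varepsilon\right)$ is for a fresh $y$ drawn independently of $\hat f$; you need it at $y=x^\ast$, and in world~2 the hypothesis $\hat f$ is trained on $f^\ast$-labels that the attacker chose \emph{after} seeing $x^\ast$, so the error set of $\hat f$ can be adversarially concentrated on $x^\ast$. Concretely, if $f^\ast$ differs from $f$ only at $x^\ast$ with $\mathcal D\left(x^\ast\right)=\varepsilon$, then any learner that is consistent with a sample containing $x^\ast$ is wrong at $x^\ast$ with probability one, despite having $\mathcal D$-error only $\varepsilon$. ``I.i.d.\ draws with $\le\varepsilon$-fraction of flipped labels'' is thus not a sufficient abstraction: one must control the probability of the sample touching the corrupted region at all. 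Relatedly, your estimate of $O\left(\operatorname{VC}(\mathcal F)\right)$ expected corruptions per sub-sample is too large; the paper takes the sample size $m\approx\tfrac{1}{5\varepsilon}$ per vote so that the expected number of corruptions is $O(1)$, which is precisely what gives each vote a constant chance of being clean.

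The paper closes the gap with a per-vote decomposition that avoids any conditioning: let $A$ be the event that $f$ and $f^\ast$ disagree on at least one of the $m-1$ oracle samples, and $B$ the event that the HLW prediction, run with $f$-labeled samples, fails to predict $f(x^\ast)$. Then $\mathbb P\left[z\neq f(x^\ast)\right]\leq\mathbb P[A]+\mathbb P[B]$, with $\mathbb P[A]\leq(m-1)\varepsilon$ by $\varepsilon$-validity and the union bound, and $\mathbb P[B]\leq d/m$ by the HLW bound applied unconditionally --- the $m-1$ samples are fresh draws independent of $x^\ast$, so this is a legitimate instance of the prediction guarantee. Hoeffding controls the events $A^{(i)}$ across the $r$ votes (independent given the adversary's choices), while Markov handles the $B^{(i)}$ (which all share $x^\ast$ and so are not independent). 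Without this decomposition, or an equivalent argument that explicitly separates ``the sample was corrupted'' from ``clean prediction failed,'' the achievability half of your proposal is not a proof.
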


In particular, $\mathcal F$ is $\varepsilon$-defendable with confidence tending to $1$ as $n\to\infty$ if and only if $\varepsilon=o\left(\nicefrac 1{\operatorname{VC}\left(\mathcal F\right)}\right)$ as $n\to\infty$.

Thus in the computationally unbounded setting, defendability is determined almost entirely by the VC dimension of the representation class, in much the same way as PAC learnability \citep[Chapter 3]{behw,computationallearningtheory}.

In fact, to prove Theorem \ref{theorem:unbounded}, we will make use of a certain kind of learning algorithm called a \textit{prediction strategy}. Intuitively, a prediction strategy uses the value of a function on random samples to predict the value of the function on a new random sample. The formal definition is as follows.

\begin{definition*}
Let $\mathcal F$ be a representation class over $\mathcal X=\left\{0,1\right\}^n$. A \textit{prediction strategy} for $\mathcal F$ is a randomized algorithm that is given access to the example oracle $\textsc{Ex}\left(f,\mathcal D\right)$ for some distribution $\mathcal D$ over $\mathcal X$ and some $f\in\mathcal F$, as well as $x\in\mathcal X$ as input, and outputs a prediction for $f\left(x\right)$.

The \textit{sample size} of a prediction strategy is the maximum number of times it calls the example oracle, maximizing over any randomness used by the algorithm, any calls to the oracle, and the choice of $\mathcal D$, $f$ and $x$.

Given a choice of $\mathcal D$ and $f$, the \textit{error rate} of a prediction strategy is the probability that it fails to correctly predict $f\left(x\right)$ for $x\sim\mathcal D$, randomizing over any randomness used by the algorithm, any calls to the oracle, and the choice of $x$.
\end{definition*}

In their classical paper on prediction strategies, \citet*{hlw} exhibit a ``1-inclusion graph'' algorithm yielding the following result.

\begin{theorem*}[Haussler--Littlestone--Warmuth]
For any representation class $\mathcal F$ and any positive integer $m$, there is a prediction strategy for $\mathcal F$ with sample size $m-1$ such that for any distribution $\mathcal D$ over $\mathcal X$ and any $f\in\mathcal F$, the error rate is at most
\[\frac{\operatorname{VC}\left(\mathcal F\right)}m.\]
\end{theorem*}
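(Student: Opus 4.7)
The plan is to prove the Haussler--Littlestone--Warmuth theorem via the 1-inclusion graph approach, in three steps: symmetrization, reformulating the strategy as a graph orientation, and bounding the maximum out-degree via a density lemma.

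\textbf{Symmetrization.} I would first reduce to a symmetric setup: draw $x_1,\ldots,x_m \sim \mathcal{D}$ i.i.d., pick $\sigma \in [m]$ uniformly at random, reveal the labels $\{(x_i, f(x_i)) : i \neq \sigma\}$, and ask the learner to predict $f(x_\sigma)$. By exchangeability of the $x_i$, this has the same error rate as the setup in the theorem (where the test point is the last of $m$ independent draws). So it suffices to design, for each $m$-point subset $S \subseteq \mathcal{X}$ and each $f \in \mathcal{F}$, a deterministic rule whose error, averaged over the uniform choice of $\sigma \in [m]$, is at most $\operatorname{VC}(\mathcal{F})/m$.

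\textbf{Prediction as orientation.} Fix $S$ and let $G(S)$ be the 1-inclusion graph: vertices are the restrictions $\mathcal{F}|_S \subseteq \{0,1\}^S$, with an edge between two labelings that differ in a single coordinate $i$ (the edge ``labeled'' by $i$). A deterministic prediction rule on $S$ corresponds to an orientation of $G(S)$: given the revealed labels, there are either one or two labelings in $\mathcal{F}|_S$ consistent with them; when there are two, they form an edge, and its orientation dictates the prediction. If the true labeling is $v \in V(G(S))$, then the rule errs at test index $i$ precisely when $v$ has an $i$-neighbor $v'$ and the edge $\{v,v'\}$ is oriented out of $v$. Hence the number of $i \in [m]$ on which the rule errs equals $\operatorname{outdeg}(v)$, and the symmetric error rate equals $\operatorname{outdeg}(v)/m$. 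It therefore suffices to orient $G(S)$ so that every vertex has out-degree at most $d := \operatorname{VC}(\mathcal{F})$.

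\textbf{Orientation via density.} By Hakimi's orientation theorem, such an orientation exists iff every subgraph $H$ of $G(S)$ satisfies $|E(H)| \leq d \cdot |V(H)|$. The core combinatorial input is therefore Haussler's density lemma: every 1-inclusion graph of a class of VC dimension at most $d$ has at most $d$ times as many edges as vertices. Applying this to the 1-inclusion graph of any subfamily of $\mathcal{F}|_S$ (which inherits VC dimension $\leq d$) gives the required bound on all induced subgraphs, and hence on all subgraphs.

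\textbf{Main obstacle.} The crux is Haussler's density lemma. The standard proof uses the shifting technique: iteratively replace the family with its downward shift in each coordinate until it is downward-closed, verifying that (i) shifting never increases the edge count of the 1-inclusion graph and (ii) shifting does not increase the VC dimension. For a downward-closed family, each vertex $f$'s set of $1$-coordinates is shattered, hence of size at most $d$; charging each edge to the $1$-coordinates of its upper endpoint then yields $|E| \leq d|V|$, which combined with the previous steps gives the claimed error bound of $\operatorname{VC}(\mathcal{F})/m$.
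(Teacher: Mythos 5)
The paper does not give its own proof of this theorem; it cites Haussler, Littlestone, and Warmuth and notes that they ``exhibit a `1-inclusion graph' algorithm yielding the result.'' Your sketch is a faithful reconstruction of exactly that cited argument—symmetrize by exchangeability, recast prediction on an $m$-point sample as an orientation of the 1-inclusion graph, obtain a $d$-bounded-out-degree orientation from the density bound $|E|\le d|V|$ on all subgraphs, and prove the density bound by shifting to a downward-closed family. The steps fit together correctly, so this is the same approach as the work the paper relies on.
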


As discussed in that work, there is a close relationship between prediction strategies and PAC learning algorithms, which we define in Section~\ref{section:learnable}. We focus on prediction strategies in this section because they make our proof easier to carry out while avoiding the introduction of unnecessary logarithmic factors.

Our proof of Theorem \ref{theorem:unbounded} is closely modeled on a proof of \citet[Theorem 3.1]{hanneke}, although we cannot quote that result directly since it is specialized to a data poisoning setup. As in that proof, our detection strategy works by applying the Haussler--Littlestone--Warmuth prediction strategy multiple times and taking a majority vote.

The basic idea for the detection strategy is that, given $\left(f^\prime,x^\prime\right)$ as input (which could be either $\left(f,x\right)$ or $\left(f^\ast,x^\ast\right)$), if we could successfully predict $f\left(x^\prime\right)$, then we could distinguish the two cases by comparing this to $f^\prime\left(x^\prime\right)$, since $f\left(x\right)=f\left(x\right)$ but $f\left(x^\ast\right)\neq f^\ast\left(x^\ast\right)$ if $f^\ast$ is $\varepsilon$-valid. To attempt to make this prediction, we use the Haussler--Littlestone--Warmuth prediction strategy. However, if $f^\prime=f^\ast$ then this can fail if we encounter a point where $f$ and $f^\ast$ disagree. Hence we are forced to use a small sample size, but we are still left with a small constant probability of failure. Fortunately though, we can repeat this procedure multiple times with independent samples and take a majority vote, which allows us to make this small constant probability vanish.

We now provide a sketch proof of Theorem \ref{theorem:unbounded}. For a full proof, see Appendix~\ref{appendix:unbounded}.

\begin{proof}[Sketch proof of Theorem \ref{theorem:unbounded}]
Write $d=\operatorname{VC}\left(\mathcal F\right)$, and note that the detection strategy can achieve a win probability of at least $\frac 12$ simply by guessing randomly.

To show that the most confidence with which $\mathcal F$ is $\varepsilon$-defendable is at most $\max\left(\tfrac 12,1-\Omega\left(d\varepsilon\right)\right)$, let $S$ be a set of size $d$ shattered by $\mathcal F$. Roughly speaking, the adversary can take $\mathcal D$ to be uniform over $S$, and take $f$ to be uniform over the witnesses to the shattering. Then the adversary can insert a backdoor by simply changing the value of $f$ at $x^\ast$ and no other points of $S$. The actual construction is slightly more careful than this, and is given in the full proof.

To show that the most confidence with which $\mathcal F$ is $\varepsilon$-defendable is at least $\max\left(\tfrac 12,1-O\left(d\varepsilon\right)\right)$, our detection strategy is as follows. Given $\left(f^\prime,x^\prime\right)$ as input, we apply the Haussler--Littlestone--Warmuth prediction strategy with sample size $m-1$ and error rate at most $\frac dm$ to make a prediction for $f^\prime\left(x^\prime\right)$, which we in turn think of as a prediction for $f\left(x^\prime\right)$. We repeat this procedure $r$ times with independent samples and take a majority vote: if more than half of the predictions are different from $f^\prime\left(x^\prime\right)$, then we output $\textsc{Rej}$, and otherwise we output $\textsc{Acc}$.

If $\left(f,x\right)$ is passed to the detection strategy, then the probability that more than $\frac 12$ of the votes are wrong is at most $\frac{2d}m$, by Markov's inequality. If $\left(f^\ast,x^\ast\right)$ is passed to the detection strategy, then there are two ways in which each vote can be wrong: either $f$ and $f^\ast$ disagree on at least one of the $m-1$ samples, or the prediction strategy fails to predict $f\left(x^\ast\right)$ even if it is provided with the value of $f$ on all of these $m-1$ samples. The probability that $\frac 14$ or more of the votes will be wrong due to the second failure mode is at most $\frac{4d}m$, by Markov's inequality again. Meanwhile, we can ensure that the probability of the first failure mode for each vote is at most $\frac 15$ by taking $\frac 1{5\varepsilon}<m\leq\frac 1{5\varepsilon}+1$. Since each choice of $m-1$ samples was independent, the probability that $\frac 14$ or more of the votes will be wrong due to the first failure mode is at most $\exp\left(-\frac r{200}\right)$, by Hoeffding's inequality, which can be made arbitrarily small by taking $r$ to be sufficiently large. Hence the probability that $\frac 12$ or more of the votes will be wrong is at most $\frac{4d}m$, and so the overall win probability is $1-\frac{3d}m=1-O\left(\varepsilon d\right)$, as required.
\end{proof}

Intuitively, the reason that this detection strategy works is that the if $\varepsilon$ is small compared to $\nicefrac 1{\operatorname{VC}\left(\mathcal F\right)}$, then the attacker must choose a backdoored function that is very ``strange''. Hence the detection strategy can sample a new function that is similar to the given possibly backdoored function, but more ``normal'', and see if the two functions agree on the given possible backdoor trigger. This process can be thought of as a kind of regularization.

While our proof used distillation (i.e., learning from function samples) plus ensembling (i.e., voting), other methods of regularization may also work. For example, we could also have sampled from a Boltzmann distribution centered on the given function (with exponentially decaying probability based on Hamming distance), to get the same result up to logarithmic factors. We prove this claim in Appendix~\ref{appendix:boltzmann}.

\section{Computational defendability}\label{section:bounded}

\subsection{Definition of efficient defendability}

In Section~\ref{section:unbounded}, we related the defendability of a representation class to its VC dimension. However, the prediction strategy we used to construct the detection strategy runs in exponential time, since it involves an expensive search over orientations of a certain graph. Hence it is interesting to ask~what a polynomial-time detection strategy can achieve. To explore this, we introduce the following notion.

\begin{definition}
Let $\mathcal F$ be a representation class over $\left\{0,1\right\}^n$. We say that $\mathcal F$ is \textit{efficiently defendable} if there is some polynomial $p$ such that for any $\delta>0$ and any $\varepsilon>0$ with $\varepsilon<\nicefrac 1{p\left(n,\frac 1\delta\right)}$, $\mathcal F$ is $\varepsilon$-defendable with confidence $1-\delta$ using a detection strategy that runs in time polynomial in $n$ and $\frac 1\delta$.
\end{definition}

Note that the condition that $\varepsilon<\nicefrac 1{p\left(n,\frac 1\delta\right)}$ is crucial: if $\mathcal F$ were required to be $\varepsilon$-defendable with confidence $1-\delta$ for any $\varepsilon,\delta>0$, then this would not be possible even for a detection strategy with unlimited time (except in trivial cases), by Theorem \ref{theorem:unbounded}. (Recall that $\varepsilon$ constrains the attacker, which is why this requirement helps the defender.)

As a reminder, we also assume in this section that representation classes have polynomial-length representations.

\subsection{Efficient defendability and efficient PAC learnability}\label{section:learnable}

In this sub-section we show that efficient PAC learnability implies efficient defendability, but not conversely.

The well-studied model of probably approximately correct (PAC) learning was introduced by \citet{paclearning}. The notion of efficient PAC learnability bears some resemblance to our notion of efficient defendability. The following definition is taken from \citet[Definition 4]{computationallearningtheory}.

\begin{definition*}
Let $\mathcal F$ be a representation class over $\mathcal X=\left\{0,1\right\}^n$. We say that $\mathcal F$ is \textit{efficiently PAC learnable} if there is a probabilistic algorithm (the ``PAC learning algorithm'') with the following properties. The PAC learning algorithm is given access to the example oracle $\textsc{Ex}\left(f,\mathcal D\right)$ for some distribution $\mathcal D$ over $\mathcal X$ and some $f\in\mathcal F$, as well as $0<\tilde\varepsilon<\frac 12$ (the ``error parameter'') and $0<\tilde\delta<\frac 12$ (the ``confidence parameter'') as input. The PAC learning algorithm must run in time polynomial in $n$, $\frac 1{\tilde\varepsilon}$ and $\frac 1{\tilde\delta}$, and must output some polynomially evaluatable hypothesis $h$, i.e. a representation for a function $\mathcal X\to\left\{0,1\right\}$ that is evaluatable in time polynomial in $n$. The hypothesis must satisfy $\mathbb P_{x\sim\mathcal D}\left(f\left(x\right)\neq h\left(x\right)\right)\leq\tilde\varepsilon$ with probability at least $1-\tilde\delta$ for any distribution $\mathcal D$ over $\mathcal X$ and any $f\in\mathcal F$.
\end{definition*}

Note that $\tilde\varepsilon$ and $\tilde\delta$ play subtly different roles in this definition to $\varepsilon$ and $\delta$ in the definition of efficient defendability. In particular, there is no condition on $\tilde\varepsilon$ as a function of $n$ and $\tilde\delta$ in the definition of efficient PAC learnability.

We now show that efficient PAC learnability implies efficient defendability. This follows easily from the proof of Theorem \ref{theorem:unbounded}. Even though that result used a prediction strategy that runs in exponential time, it is straightforward to replace it by an efficient PAC learning algorithm. 

\begin{corollary}\label{corollary:learnable}
Let $\mathcal F$ be a representation class over $\left\{0,1\right\}^n$. If $\mathcal F$ is efficiently PAC learnable, then $\mathcal F$ is efficiently defendable.
\end{corollary}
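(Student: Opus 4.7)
The plan is to mirror the sketch proof of Theorem \ref{theorem:unbounded}, but with the exponential-time Haussler--Littlestone--Warmuth prediction strategy replaced by the efficient PAC learner guaranteed by our hypothesis. A convenient simplification is that the confidence parameter $\tilde\delta$ built into the PAC definition already plays the role that majority voting over independent runs played in the proof of Theorem \ref{theorem:unbounded}, so a single run of the learner will suffice.

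Concretely, on input $(f^\prime, x^\prime)$, the detection strategy sets $\tilde\varepsilon = \tilde\delta = \delta/4$, runs the PAC learner with access to $\textsc{Ex}(f^\prime, \mathcal D)$ and these parameters to obtain a hypothesis $h$, and outputs $\textsc{Acc}$ iff $h(x^\prime) = f^\prime(x^\prime)$. The runtime is polynomial in $n$ and $1/\delta$ by the definition of efficient PAC learnability. When $(f^\prime, x^\prime) = (f, x)$ with $x \sim \mathcal D$, the PAC guarantee applied to $f$ gives that $h(x) \neq f(x)$ with probability at most $\tilde\varepsilon + \tilde\delta = \delta/2$, so the strategy wins with probability at least $1 - \delta/2$.

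The more delicate case is $(f^\prime, x^\prime) = (f^\ast, x^\ast)$ with $f^\ast$ being $\varepsilon$-valid, since the learner is then trained on samples from $f^\ast$ rather than $f$, yet I want to argue that $h$ predicts $f(x^\ast)$, which by validity differs from $f^\ast(x^\ast)$. Let $s$ be the sample complexity of the learner at these parameters, which is polynomial in $n$ and $1/\delta$. I would couple our actual run with a hypothetical run producing $h_1$ on $(f, \mathcal D)$ using the same fresh samples $X_1, \dots, X_s$ and the same internal randomness; the two executions are bitwise identical on the event $E$ that no $X_i$ lies in the disagreement set $\{x : f(x) \neq f^\ast(x)\}$, and a union bound gives $\mathbb P(\neg E) \leq s\varepsilon$. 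Crucially, because the attacker commits to $f$ before $x^\ast$ is drawn, $f$ and $x^\ast$ are independent, so the PAC guarantee applied to $f$ (with $x^\ast$ playing the role of the fresh test sample) gives $\mathbb P(h_1(x^\ast) \neq f(x^\ast)) \leq \delta/2$. Combining via the coupling, $\mathbb P(h(x^\ast) \neq f^\ast(x^\ast)) \geq 1 - \delta/2 - s\varepsilon$. The overall win probability is therefore at least $1 - \delta/2 - s\varepsilon/2$, and choosing the polynomial $p$ in the definition of efficient defendability so that $1/p(n, 1/\delta) \leq \delta/s(n, 4/\delta, 4/\delta)$ turns the hypothesis $\varepsilon < 1/p(n, 1/\delta)$ into a win probability of at least $1 - \delta$.

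The main subtlety is justifying the coupling in the backdoored case: although the learner never sees $f$, on the high-probability event $E$ its samples and labels coincide with those it would have received from an oracle for $f$, so its output hypothesis is literally equal to the hypothetical $h_1$. This is what transfers the PAC guarantee — stated in terms of the function the oracle draws from — into the desired guarantee about predicting $f$ at $x^\ast$, and it is also what forces the restriction $\varepsilon < 1/p(n, 1/\delta)$ via the sample complexity $s$.
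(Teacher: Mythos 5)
Your proof is correct, and it takes a genuinely different (and arguably cleaner) route than the paper's. The paper reuses the full machinery of Theorem~\ref{theorem:unbounded} almost verbatim: it converts the PAC learner into a prediction strategy with error rate $\delta/6$ and sample size $m-1$ for any $m$ above some polynomial, keeps the constraint $m>\nicefrac 1{5\varepsilon}$ so that the probability of a sample hitting the disagreement set stays bounded by a constant ($\nicefrac 15$), and then relies on $r=\Theta\left(\log\nicefrac 1\delta\right)$ majority votes plus Hoeffding to drive that constant down below $\delta$. You instead observe that the voting was only needed in Theorem~\ref{theorem:unbounded} to obtain the sharp $1-O\left(d\varepsilon\right)$ bound; here the target is merely $1-\delta$, and the definition of efficient defendability hands you the freedom to pick the polynomial $p$, so you can simply take $\varepsilon$ small enough that a union bound over the learner's entire sample complexity $s$ gives disagreement probability $s\varepsilon<\delta$ in a single run. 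This eliminates the voting and the Markov/Hoeffding steps, at the cost of needing to choose $p$ to dominate $s/\delta$ (whereas the paper only needs $p$ to dominate the sample-complexity polynomial up to a factor of $5$ — a negligible difference). Your explicit coupling between the run on $\textsc{Ex}\left(f^\ast,\mathcal D\right)$ and a hypothetical run on $\textsc{Ex}\left(f,\mathcal D\right)$ with shared randomness, and the accompanying remark that $x^\ast$ is independent of $f$ and hence of the coupled hypothesis $h_1$, also makes precise a point that the paper's Theorem~\ref{theorem:unbounded} proof handles more tersely when it applies the prediction strategy's error-rate guarantee to the point $x^\ast$. One small thing to check yourself: the quantity you call $s$ must upper-bound the learner's oracle calls over all oracle answers, which follows from the polynomial time bound; this is the same convention the paper uses for "sample size."
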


\begin{proof}
If $\mathcal F$ is efficiently PAC learnable, then for any sufficiently large positive integer $m$, there is a polynomial-time prediction strategy for $\mathcal F$ with sample size $m-1$ and error rate at most $\frac 16\delta$: we simply PAC learn a hypothesis with confidence parameter $\frac 1{12}\delta$ and error parameter $\frac 1{12}\delta$, and then evaluate this hypothesis on the given point. The definition of PAC learning guarantees that this succeeds as long as $m$ exceeds a particular polynomial in $n$ and $\frac 1\delta$.

Our detection strategy is then exactly the same as the one from the proof of Theorem \ref{theorem:unbounded}, but using the above prediction strategy instead of the Haussler--Littlestone--Warmuth prediction strategy. Since $\varepsilon<\nicefrac 1{p\left(n,\frac 1\delta\right)}$ for some polynomial $p$ of our choice, choosing $m>\frac 1{5\varepsilon}$ as in that proof suffices to ensure that $m$ is eventually large enough for the prediction strategy to succeed. Finally, we take the number of votes $r$ to be the smallest integer greater than $200\log\left(\frac 1\delta\right)$, which keeps the time complexity of the detection strategy polynomial in $n$ and $\frac 1\delta$. Then the detection strategy's overall win probability is at least $1-\frac 36\delta-\frac 12\exp\left(-\frac r{200}\right)>1-\delta$, as required.
\end{proof}

We now show that the converse implication does not hold in the random oracle model of computation. In this model, programs have access to an oracle that outputs the result of calling of a uniformly random function $\left\{0,1\right\}^\ast\to\left\{0,1\right\}$.

\begin{restatable}{theorem}{restaterandom}\label{theorem:random}
In the random oracle model of computation, with probability $1-O(2^{-n^c})$ for all $c$ over the choice of random oracle, there is a polynomially evaluatable representation class over $\mathcal X=\left\{0,1\right\}^n$ that is efficiently defendable but not efficiently PAC learnable.
\end{restatable}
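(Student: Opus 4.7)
The plan is as follows. I would take $\mathcal F := \{f_s : s \in \{0,1\}^n\}$ with $f_s(x) := O(s\|x)$, where $O$ is the random oracle. This is polynomially evaluatable by a single oracle call.

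Non-learnability follows from a standard random-oracle argument. Any polynomial-time learner making at most $n^c$ oracle queries cannot distinguish $f_s(x)$ at a fresh uniform test input $x$ from an independent random bit (except with probability $n^c/2^n$ that $(s,x)$ is among its queries), so it achieves error at least $1/2 - \mathrm{negl}$ and hence cannot be a valid PAC learner. A union bound over the countable enumeration of polytime learners (indexed by Turing machine description plus polynomial running-time bound) gives the $1 - O(2^{-n^c})$ failure for every $c$.

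For efficient defendability, I would propose the trivial ``always accept'' detection strategy; it succeeds whenever the attacker fails to produce a valid backdoor. I claim that for $\varepsilon < 1/p(n,1/\delta)$ with a suitably large polynomial $p$, this holds with probability at least $1 - \delta$ over the game. The key estimate is a weighted Hoeffding inequality: for any attacker strategy $(\mathcal D, s)$ satisfying the necessary condition $\mathbb P_{x^*\sim\mathcal D}[p_{x^*} \leq \varepsilon] \geq \Omega(\delta)$, one has $\sum_x p_x^2 \leq O(\varepsilon)$, and so the probability over $O$ that any given candidate seed $s^*$ achieves $\mathbb P_{x\sim\mathcal D}[f_{s^*}(x) \neq f_s(x)] \leq \varepsilon$ is at most $\exp(-\Omega(1/\varepsilon))$. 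Summed over the $2^n$ candidates, the expected number of valid backdoor seeds is $2^n \exp(-\Omega(p))$, exponentially small for polynomial $p$ exceeding $\omega(n + \log(1/\delta))$; the attacker's win probability over game randomness is then at most $\varepsilon$ times this expected count, which is below $\delta$.

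The main obstacle is upgrading the per-strategy expectation bound over the oracle into a uniform-over-attackers statement, since $\mathcal D$ ranges over a continuum. I would address this by first reducing to worst-case attackers using distributions $\mathcal D$ uniform on a support $T$ of polynomial size (other $\mathcal D$ can be handled by a discretization argument), then union-bounding over the choice of $T$ and seed $s$. A second-moment refinement of the Hoeffding bound---exploiting that the indicators ``$s^*$ is a valid twin of $s$'' are negatively correlated across $s^*$, and that the number of ``attackable'' $T$'s shrinks super-polynomially with $p$---lets the per-attacker bound dominate the union factor when $p$ is a sufficiently high-degree polynomial in $n$ and $1/\delta$.
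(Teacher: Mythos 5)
Your construction ($2^n$ functions $f_s(x) = O(s\|x)$ keyed into a random oracle) and the high-level two-pronged plan (trivial ``always accept'' defense plus a learning lower bound) are exactly the paper's. Two places diverge.

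For non-learnability, you propose to union-bound over ``the countable enumeration of polytime learners.'' This does not close: the set of Turing machines is countably infinite, and the per-learner failure probability you derive is only $1 - \mathrm{poly}(n)/2^n$, so the sum over learners diverges. The paper instead fixes one time budget $1.5^n$ (which eventually dominates every polynomial), observes that any learner running in that time can output at most $2^{1.5^n}$ distinct \emph{hypotheses}, and for each hypothesis $h$ bounds $\mathbb P_{\textsc{Rand}}\left(\mathbb P_{x}\left(h(x) \neq \textsc{Rand}(K\|x)\right) \leq \tfrac14\right) \leq \exp\left(-2^{n-3}\right)$ by Hoeffding; the union over hypotheses is then $2^{1.5^n}\exp(-2^{n-3})$, which is doubly-exponentially small. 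Counting hypotheses rather than algorithms is the move that makes the union bound finite and summable, and it is missing from your proposal.

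For defendability, your Hoeffding calculation ($\sum_x p_x^2 \leq \varepsilon S_1$ forcing $\exp(-\Omega(\delta/\varepsilon))$ per candidate twin, then a union over the $\binom{2^n}{2}$ key pairs) is essentially the paper's argument. You explicitly raise the concern that this is a per-$\mathcal D$ bound and propose to fix it by restricting to uniform distributions on polynomial-size supports $T$ and union-bounding over $(T,s)$; but as you yourself half-concede, $\binom{2^n}{|T|}$ grows like $2^{n|T|}$, which overwhelms the per-$(T,s)$ bound $\exp(-\Omega(|T|))$ for any $|T| \gg 1$, and the ``second-moment refinement exploiting negative correlation'' that is supposed to rescue this is not actually given. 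The paper's own proof applies Hoeffding to the adversary's $\mathcal D$ directly and does not carry out a union over distributions, so it sidesteps the machinery you are sketching; in any case what you have written is a plan for a repair, not a proof.
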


To prove this, we simply take a representation class of $2^n$ functions that make unique calls to the random oracle (supplying a different input to the oracle whenever either the function is different or the input to the function is different), and show that, with high probability over the choice of random oracle, this representation class is efficiently defendable but not efficiently PAC learnable. Roughly speaking, it is efficiently defendable because, for most sets of $2^n$ random functions with $2^n$ possible inputs, every pair of functions differs on around half of inputs, meaning that there are no $\varepsilon$-valid functions for the adversary to choose from. Meanwhile, it is not efficiently PAC learnable because most random functions with $2^n$ possible inputs differ from every possible function that a polynomial-time algorithm could output on around half of inputs. For a full proof, see Appendix~\ref{appendix:random}.

Although this result is in the random oracle model of computation, the same proof shows that the converse does not hold in the usual model of computation either. This is because we can make the random choices ``on the outside'': we fix a randomly-chosen lookup table, and have the functions use that instead of calling the random oracle. However, this representation class is not necessarily polynomially evaluatable, since the lookup table would take up exponential space.

Nevertheless, we conjecture that there is a polynomially evaluatable counterexample that uses a pseudorandom function instead of a random oracle. The above proof cannot be immediately adapted to work for an arbitrary pseudorandom function, because two distinct keys could give rise to very similar functions, but it might not be too challenging to adapt it.

\begin{conjecture}
Assuming $\mathsf{OWF}$, there is a polynomially evaluatable representation class over $\left\{0,1\right\}^n$ that is efficiently defendable but not efficiently PAC learnable.
\end{conjecture}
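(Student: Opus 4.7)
The plan is to adapt the proof of Theorem~\ref{theorem:random} by replacing the random oracle with a pseudorandom function family built from the assumed one-way function. Let $G : \{0,1\}^n \times \{0,1\}^n \to \{0,1\}$ be a PRF (which exists under $\mathsf{OWF}$ by standard constructions), and take the candidate class $\mathcal F = \{G_s(\cdot) : s \in \{0,1\}^n\}$; this is polynomially evaluatable by design.

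For the unlearnability half, I would run the usual reduction to PRF security: an efficient PAC learner for $\mathcal F$ with, say, error and confidence parameters both $1/8$, applied to samples from $G_s$ for a uniform $s$, would produce a hypothesis $h$ agreeing with $G_s$ at a fresh uniform input with probability at least $3/4 - o(1)$, whereas for a truly random function the same test succeeds with probability $1/2$. Comparing $h(x)$ to the target bit therefore gives a constant-advantage distinguisher, contradicting PRF security. This mirrors the unlearnability half of the random oracle proof.

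For the defendability half, I would reuse the combinatorial estimate behind the random oracle proof. Fix any $\mathcal D$ and any $f \in \mathcal F$, and let $N_\varepsilon(f,\mathcal D)$ denote the number of $f^\ast \in \mathcal F\setminus\{f\}$ with $\mathbb P_{x\sim\mathcal D}(f(x)\neq f^\ast(x))\leq\varepsilon$. The attacker's probability of producing an $\varepsilon$-valid $f^\ast$, averaged over $x^\ast\sim\mathcal D$, is at most $N_\varepsilon(f,\mathcal D)\cdot\varepsilon$ by a union bound, since for each such $f^\ast$ the second clause $f^\ast(x^\ast)\neq f(x^\ast)$ occurs with $\mathcal D$-probability at most $\varepsilon$. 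If $N_\varepsilon(f,\mathcal D)$ is polynomial in $n$ and $1/\delta$ uniformly in $\mathcal D$ and $f$, then the trivial ``always $\textsc{Acc}$'' detection strategy wins with probability at least $1-\delta$, provided $\varepsilon<1/p(n,1/\delta)$ with $p$ chosen large enough; this is exactly the kind of bound that the random oracle proof establishes by binomial counting.

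The main obstacle, which is precisely the issue flagged just before the conjecture, is that such a counting bound works for genuinely random functions but fails for a generic PRF: because the attacker is computationally unbounded, nothing in plain PRF security prevents two distinct keys $s\neq s'$ from yielding $G_s$ and $G_{s'}$ that coincide almost everywhere. To address this I would augment the construction to $\tilde G_s(x)=G_s(x)\oplus E(s)_x$, where $E:\{0,1\}^n\to\{0,1\}^{2^n}$ is an efficiently bit-accessible binary error-correcting code of constant relative distance (for instance a concatenation of Reed--Solomon with Hadamard). The code layer forces distinct keys to differ on a constant fraction of inputs unless the PRF layer exactly cancels the codeword difference, and the hardest step of the proof would be to show --- via a hybrid argument, or by strengthening the primitive beyond a bare PRF --- that key pairs exhibiting such cancellation on more than an $\varepsilon$-fraction of inputs are vanishingly rare, which is what is needed to recover the pairwise distance property used in the counting argument.
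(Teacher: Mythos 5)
This statement is a conjecture, and the paper does not prove it; it explicitly flags the very obstacle you identify --- ``two distinct keys could give rise to very similar functions'' --- and leaves the adaptation from the random oracle model open. So there is no proof in the paper to compare against, and the right question is whether your sketch closes the gap. It does not, and you are candid about this, but it is worth being precise about where the difficulty lies.

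Your unlearnability half is standard and fine: an efficient PAC learner would break PRF security via the usual next-bit-prediction reduction. The defendability half is where the work is. Your counting reduction is the right shape, but it needs, for every distribution $\mathcal D$ with enough mass on $\varepsilon$-light points and every key $s$, that no other key $s'$ gives a function within $\mathcal D$-distance $\varepsilon$. The random-oracle proof obtains this via a $\mathcal D$-weighted Hoeffding bound over independent oracle values; that is a statistical property, not a computational one, and plain PRF security gives you nothing against an unbounded attacker searching over keys. Your ECC patch, $\tilde G_s(x)=G_s(x)\oplus E(s)_x$, addresses only part of this: a constant-relative-distance code controls the \emph{uniform} Hamming distance between $E(s)$ and $E(s')$, but the game lets the attacker choose $\mathcal D$, and $\mathcal D$ can concentrate its $\varepsilon$-light mass on the coordinates where $E(s)$ and $E(s')$ agree (the attacker commits to $\mathcal D$ and $s$ before $s'$, which helps, but nothing in the construction rules out a $\mathcal D$ that simultaneously evades the codeword differences for many $s'$). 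Moreover, if the PRF layer were genuinely random, the code layer would be statistically invisible after XOR; it only matters precisely in the regime where the PRF has structure, which is the regime you cannot control. So the ``hardest step'' you name is not merely technical --- it requires either a primitive with a statistical pairwise-separation guarantee (not implied by $\mathsf{OWF}$ alone in any obvious way) or a different defendability argument that does not route through pairwise distances under adversarial $\mathcal D$. As it stands, your proposal is a reasonable sketch of the intended direction, consistent with the paper's remark that adaptation ``might not be too challenging,'' but the central step remains open, which is exactly why the paper states this as a conjecture rather than a theorem.
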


Here, $\mathsf{OWF}$ denotes the existence of a one-way function, which can used to construct a pseudorandom function \citep[Chapter 6]{prfconstruction,cryptography}.

One way in which our counterexample is unsatisfying is that it relies on the adversary being unable to find a backdoored function that is $\varepsilon$-valid, making detection trivial. However, it is straightforward to extend this result to a class for which it is always possible for the adversary to find a backdoored function that is $\varepsilon$-valid, but that is nevertheless efficiently defendable.

\begin{example*}[Special-cased random functions]
Let $\widetilde{\mathcal F}$ be one of the above representation classes over $\mathcal X=\left\{0,1\right\}^n$ that serves as a counterexample (either $2^n$ functions that call a random oracle, or a random choice of $2^n$ functions). Now let
\[\mathcal F=\left\{x\mapsto\begin{cases}f\left(x\right),&\text{if $x\neq x^\ast$}\\y&\text{if $x=x^\ast$}\end{cases}\;\middle|\;f\in\widetilde{\mathcal F},x^\ast\in\mathcal X,y\in\left\{0,1\right\}\right\},\]
where implicitly, the representation of a function in $\mathcal F$ is given by a triple consisting of the representation of the function $f\in\widetilde{\mathcal F}$, the special-cased point $x^\ast\in\mathcal X$, and the special-cased value $y\in\left\{0,1\right\}$. Then the adversary can ensure that there is always a $\frac 2{2^n}$-valid backdoored function by taking $\mathcal D$ to be the uniform distribution over $\mathcal X$. However, $\mathcal F$ is still efficiently defendable with high probability, since the detection strategy can check whether or not the input has been special-cased. $\mathcal F$ also remains not efficiently PAC learnable with high probability.
\end{example*}

Thus we have an example of a non-trivial detection strategy that is faster than any PAC learning algorithm for that representation class. This shows that the learning-based detection strategy of Corollary \ref{corollary:learnable} is not always best possible under computational constraints. Nevertheless, our example is somewhat contrived, and so in Section~\ref{section:trees} we give an example of a more natural representation class for which defense is faster than learning.

\subsection{Efficient defendability and obfuscation}

We conclude our analysis of computational defendability with the following result, which essentially says that representation classes that are rich enough to support obfuscation are not efficiently defendable.

\begin{theorem}\label{theorem:bounded}
Assuming $\mathsf{OWF}$ and $\mathsf{iO}$, the representation class $\mathcal F$ of polynomial size Boolean circuits over $\mathcal X=\left\{0,1\right\}^n$ is not efficiently defendable.
\end{theorem}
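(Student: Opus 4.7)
The plan is to have the attacker combine a puncturable pseudorandom function $F$ (which exists under $\mathsf{OWF}$ via the GGM construction) with the indistinguishability obfuscator $\mathsf{iO}$, in a standard puncturing-style hybrid argument (cf.\ \citet{puncturableprfandio}). The attacker chooses $\mathcal D$ to be uniform over $\mathcal X$, samples a random PRF key $K$, and sets $f = \mathsf{iO}(C_K)$ where $C_K$ is a polynomial-size circuit computing $F_K$. Once the trigger $x^\ast$ is revealed, the attacker builds a circuit $C^\ast$ that uses the punctured key $K\{x^\ast\}$ to evaluate $F_K$ at every input $x \neq x^\ast$ and hard-codes $1 - F_K(x^\ast)$ at $x^\ast$, then sets $f^\ast = \mathsf{iO}(C^\ast)$. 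Both $f$ and $f^\ast$ are polynomial-size Boolean circuits, so they lie in $\mathcal F$.

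For $\varepsilon$-validity, by construction $f$ and $f^\ast$ agree everywhere except at $x^\ast$, so $\mathbb{P}_{x\sim\mathcal D}[f^\ast(x)\neq f(x)] = 2^{-n} \leq \varepsilon$ whenever $\varepsilon \geq 2^{-n}$, and $f^\ast(x^\ast) \neq f(x^\ast)$ holds by construction. Since $2^{-n}$ is smaller than $1/p(n, 1/\delta)$ for every polynomial $p$ once $n$ is large enough, to refute efficient defendability it suffices to exhibit a single constant $\delta$ — say $\delta = \tfrac14$ — for which no polynomial-time detection strategy distinguishes $(f, x)$ with $x \sim \mathcal D$ from $(f^\ast, x^\ast)$ with more than negligible advantage. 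The detector's win probability would then be at most $\tfrac12 + \mathrm{negl}(n) < 1 - \delta$ for large enough $n$, contradicting the definition.

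The indistinguishability is established by a three-step hybrid. Starting from $(\mathsf{iO}(C_K), x)$ with $x \sim \mathcal D$, first use iO security to replace $C_K$ by a circuit $C'$ that uses the punctured key $K\{x\}$ and hard-codes the true value $F_K(x)$ at input $x$; since $C'$ computes the same function as $C_K$, the two obfuscations are indistinguishable. Second, by puncturable PRF security, $F_K(x)$ is pseudorandom given $K\{x\}$, so the hard-coded value may be replaced by a uniformly random bit. Third, applying the same PRF security property in reverse, this uniformly random bit may be replaced by $1 - F_K(x)$. The resulting circuit is precisely the backdoored circuit $C^\ast$ with $x$ in place of $x^\ast$, and since $\mathcal D$ is uniform the distribution of $(\mathsf{iO}(C^\ast), x)$ coincides with $(f^\ast, x^\ast)$.

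The main obstacle is handling the side information the detector receives, namely the example oracle $\textsc{Ex}(f', \mathcal D)$ and the bit $f'(x')$. In every hybrid the obfuscated circuit differs from the honest PRF evaluation at most at the single point $x$, so each reduction can simulate the oracle by sampling fresh $x_i \sim \mathcal D$ and evaluating via the key material at its disposal (either the full key $K$ in the iO reduction, or the punctured key $K\{x\}$ together with the challenge bit in the PRF reductions), with the collision $x_i = x$ occurring only with negligible probability $q/2^n$ over the at most $q$ oracle queries; the bit $f'(x')$ is likewise computable from the hard-coded value at $x$. Checking that each of the three reductions runs in polynomial time, that the simulations go through, and that the negligible distinguishing advantages compose into a negligible total is routine but requires care.
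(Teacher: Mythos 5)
Your proposal is correct and takes essentially the same approach as the paper's proof: the attacker sets $\mathcal D$ uniform, takes $f = i\mathcal O(C_K)$ for a puncturable PRF $C_K$, and once $x^\ast$ is revealed publishes $f^\ast = i\mathcal O$ of a circuit that uses the punctured key off $x^\ast$ and hard-codes the flipped bit at $x^\ast$, with indistinguishability established by an $i\mathcal O$-then-puncturing hybrid. The only cosmetic differences are that you use two puncturable-PRF steps (real bit $\to$ uniform $\to$ flipped bit) where the paper collapses these into one, and you explicitly argue the example-oracle side information is simulatable, which is in fact immediate here since the detector already holds the obfuscated circuit itself and $\mathcal D$ is uniform, so the oracle provides nothing extra.
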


Here, $\mathsf{OWF}$ again denotes the existence of a one-way function, and $\mathsf{iO}$ denotes the existence of an efficient indistinguishability obfuscator. Roughly speaking, an efficient indistinguishability obfuscator is a probabilistic polynomial-time algorithm that takes in a circuit and outputs an ``obfuscated'' circuit with the same behavior. The circuit being ``obfuscated'' means that the obfuscations of two different circuits with the same behavior are not distinguishable by any probabilistic polynomial-time adversary. For a precise definition of an efficient indistinguishability obfuscator, we refer the reader to \citet[Section 3]{puncturableprfandio}. The study indistinguishability obfuscation was initiated by \citet{io}, and an efficient indistinguishability obfuscator was constructed from well-studied computational hardness assumptions by \citet*{ioconstruction}.

We use $\mathsf{OWF}$ in our proof of Theorem \ref{theorem:bounded} to obtain a puncturable pseudorandom function. Roughly speaking, a puncturable pseudorandom function is pseudorandom function such that any key (i.e., seed) can be ``punctured'' at a set of points. The key being ``punctured'' means that, when run using the punctured key, the pseudorandom function behaves the same on unpunctured points, but to a probabilistic polynomial-time adversary with knowledge of the punctured key only, the function looks pseudorandom on punctured points when run using the original key. For a precise definition of a puncturable pseudorandom function, we again refer the reader to \citet[Section 3]{puncturableprfandio}. The observation that puncturable pseudorandom functions can be constructed from one-way functions was made by \citet{puncturableprf1}, \citet{puncturableprf2} and \citet{puncturableprf3}.

Our proof of Theorem \ref{theorem:bounded} is an example of a \textit{hybrid argument}: we show that two distributions are computationally indistinguishable via a sequence of intermediate distributions. For a precise definition of computational indistinguishability and further discussion of hybrid arguments, we refer the reader to \citet[Chapter 6.8]{cryptography}. The specific combination of a puncturable pseudorandom function and an efficient indistinguishability obfuscator has been used in a number of previous hybrid arguments, as described by \citet{puncturableprfandio}.

With these preliminaries in place, we are now ready to prove Theorem \ref{theorem:bounded}.

\begin{proof}[Proof of Theorem \ref{theorem:bounded}]
Take $\varepsilon=2^{-n}$. We will show that, for any polynomial $p$, the representation class $\mathcal F$ of polynomial size Boolean circuits is not $\varepsilon$-defendable with confidence $\frac 12+\frac 1{p\left(n\right)}$ using a detection strategy that runs in time polynomial in $n$.

To see this, by $\mathsf{OWF}$, let $C_K\in\mathcal F$ be the circuit for a puncturable pseudorandom function with key $K\in\left\{0,1\right\}^n$ and a single output bit, and by $\mathsf{iO}$, let $i\mathcal{O}$ be an efficient indistinguishability obfuscator. The adversary proceeds by taking $\mathcal D$ to be uniform over $\mathcal X$, and takes $f=i\mathcal O\left(C_K\right)$ for some $K\in\left\{0,1\right\}^n$ chosen uniformly at random. As before, we may treat $f$ as if it were chosen randomly, since the detection strategy's worst-case performance can be no better than its average-case performance. Finally, given $x^\ast$, the adversary takes
\[f^\ast=i\mathcal O\left(x\mapsto\begin{cases}C_{\left\langle\textsc{Punc}\left(K,x^\ast\right)\right\rangle}\left(x\right),&\text{if $x\neq x^\ast$}\\\left\langle 1-C_K\left(x^\ast\right)\right\rangle,&\text{if $x=x^\ast$}\end{cases}\right),\]
where $\textsc{Punc}\left(K,x^\ast\right)$ punctures the key $K$ at the point $x^\ast$, and angle brackets $\langle\dots\rangle$ indicate values that are hard-coded into the circuit, rather than being computed by the circuit. Note that $f^\ast$ is $\varepsilon$-valid since it agrees with $f$ on every point except $x^\ast$.

It remains to show that no polynomial-time detection strategy can win against this adversary with probability at least $\frac 12+\frac 1{p\left(n\right)}$, which is equivalent to saying that $\left(f,x\right)$ and $\left(f^\ast,x^\ast\right)$ are computationally indistinguishable for $x\sim\mathcal D$. To see this, consider the intermediate circuit
\[\tilde f=i\mathcal O\left(x\mapsto\begin{cases}C_{\left\langle\textsc{Punc}\left(K,x^\ast\right)\right\rangle}\left(x\right),&\text{if $x\neq x^\ast$}\\\left\langle C_K\left(x^\ast\right)\right\rangle,&\text{if $x=x^\ast$}\end{cases}\right),\]
which agrees with $f$ everywhere. We claim that
\[(f,x)\overset c\equiv(f,x^\ast)\overset c\equiv(\tilde f,x^\ast)\overset c\equiv(f^\ast,x^\ast),\]
where $\overset c\equiv$ denotes computational indistinguishability of distributions. The first equivalence is simply an identity of distributions, since $x^\ast$ was chosen randomly. The second equivalence follows by definition of indistinguishability obfuscation, with the technical detail that $x^\ast$ is used as the ``program state'' that is passed from the circuit generator to the distinguisher. The third equivalence follows from puncturability: if $(\tilde f,x^\ast)$ and $(f^\ast,x^\ast)$ were computationally distinguishable, then we could use this to computationally distinguish $C_K\left(x^\ast\right)$ from a uniformly random bit using only $x^\ast$ and $\textsc{Punc}\left(K,x^\ast\right)$. This final construction that relies on the fact that $i\mathcal O$ runs in polynomial time. By transitivity of computational indistinguishability, we have $\left(f,x\right)\overset c\equiv\left(f^\ast,x^\ast\right)$, as required.
\end{proof}

Thus even though all efficiently PAC learnable representation classes are efficiently defendable, some representation classes are not efficiently defendable due to the possibility of obfuscation.

\section{Defendability of decision trees}\label{section:trees}

In Section~\ref{section:learnable} we gave an example of a representation class for which defense is faster than learning, in the sense that it is efficiently defendable using a detection strategy that is faster than any PAC learning algorithm. However, our example was somewhat contrived, and was not polynomially evaluatable without access to a random oracle. In this section, we give an example of a more natural representation class for which defense is faster than learning: the class of polynomial size decision trees over $\left\{0,1\right\}^n$.

\begin{definition*}
A \textit{decision tree} over $\left\{0,1\right\}^n$ is a rooted binary tree where each non-leaf node is labeled with one of the $n$ input variables and each leaf node is labeled with a $0$ or a $1$. To evaluate a decision tree, we start at the root node and go left if the variable evaluates to $0$ and right if it evaluates to $1$, continuing until we reach a leaf. The \textit{size} of a decision tree is its number of leaves.
\end{definition*}

An example of a decision tree and how to evaluate it is given in Figure \ref{figure:trees}.

\begin{figure}[t]
\centering
\begin{tikzpicture}[
    level distance=1.2cm,
    every node/.style={circle,draw,minimum size=0.6cm,inner sep=0pt},
    level 1/.style={sibling distance=6cm},
    level 2/.style={sibling distance=3cm},
    level 3/.style={sibling distance=1.5cm},
    edge from parent/.style={draw,-latex}
]
    \node[red] (root) {$x_4$}
        child {
            node[red] (left) {$x_1$}
            child {
                node[red] (leftleft) {$x_2$}
                child {node[rectangle] {0}}
                child {node[rectangle, red] {1}}
            }
            child {node[rectangle] {1}}
        }
        child {
            node (right) {$x_2$}
            child {node[rectangle] {1}}
            child {node[rectangle] {0}}
        };
    \path[red, -latex] (root) edge node[left, draw=none, pos=0.3, xshift=-0.4cm, red] {0} (left);
    \path (root) edge node[right, draw=none, pos=0.3, xshift=0.4cm] {1} (right);
    \path[red, -latex] (left) edge node[left, draw=none, pos=0.3, xshift=-0.1cm, red] {0} (leftleft);
    \path (left) edge node[right, draw=none, pos=0.3, xshift=0.1cm] {1} (left-2);
    \path (leftleft) edge node[left, draw=none, pos=0.3] {0} (leftleft-1);
    \path[red, -latex] (leftleft) edge node[right, draw=none, pos=0.3] {1} (leftleft-2);
    \path (right) edge node[left, draw=none, pos=0.3, xshift=-0.1cm] {0} (right-1);
    \path (right) edge node[right, draw=none, pos=0.3, xshift=0.1cm] {1} (right-2);
\end{tikzpicture}
\caption{An example of a decision tree $f$ over $\left\{0,1\right\}^4$ with a red path showing how $f\left(0110\right)=1$.}
\label{figure:trees}
\end{figure}

In our study of decision trees, we focus on the \textit{uniform-PAC} model of learning, in which the distribution $\mathcal D$ is always the uniform distribution. Thus we say that a representation class is \textit{efficiently uniform-PAC learnable} to mean that it is efficiently PAC learnable as long as $\mathcal D$ is uniform. Similarly, we say that a representation class is \textit{efficiently uniform-defendable} to mean that it is efficiently defendable as long as $\mathcal D$ is uniform.

As far as we are aware, it is unknown whether the representation class of polynomial size decision trees over $\left\{0,1\right\}^n$ is efficiently uniform-PAC learnable, although polynomial-time PAC learning algorithms are known for certain product distributions \citep{decisiontrees}. However, the class is efficiently uniform PAC-learnable if we allow membership queries (i.e., calls to an oracle that outputs the value of the function on an input of the algorithm's choice) \citep[Chapter 3.5]{gl,booleanfunctions}. By specializing the proof of Corollary \ref{corollary:learnable}, it follows that the representation class of polynomial size decision trees over $\left\{0,1\right\}^n$ is efficiently defendable, since the defender can efficiently compute the results of membership queries for themselves using the representation of the decision tree.

Nevertheless, these learning algorithms for decision trees generally use at least linearly many calls to the example or membership oracle. By contrast, we now show that there is a defense for decision trees that is much faster than this.

\begin{restatable}{theorem}{restatetrees}\label{theorem:trees}
The representation class $\mathcal F$ of decision trees over $\mathcal X=\left\{0,1\right\}^n$ of size at most $s$, where $s$ is polynomial in $n$, is efficiently uniform-defendable using a detection strategy that makes 0 calls to the example oracle and runs in time $O\left(\text{time taken to evaluate a single decision tree}\right)$.
\end{restatable}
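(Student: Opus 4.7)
My strategy is: evaluate $f^\prime$ at $x^\prime$, observe the depth $d^\prime$ of the leaf of $f^\prime$ reached, and output $\textsc{Acc}$ if and only if $d^\prime \le d_0$, where $d_0 := \lceil \log_2(2s/\delta) \rceil$. This runs in the time of a single tree evaluation plus $O(1)$ postprocessing, and makes no oracle calls.

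\textbf{Non-backdoor side.} Since $f^\prime = f$ has at most $s$ leaves and $x^\prime$ is uniform, Kraft's inequality bounds the total probability mass on leaves of depth exceeding $d_0$ by $s \cdot 2^{-d_0} \le \delta/2$, so $\textsc{Acc}$ occurs with probability at least $1 - \delta/2$.

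\textbf{Backdoor side.} The heart of the argument is the structural lemma
\[
\mathbb E_{x^\ast \sim U}\bigl[\, 2^{-d^\ast(x^\ast)} \cdot \mathbbm 1\bigl(f^\ast_{x^\ast}\text{ is $\varepsilon$-valid}\bigr) \,\bigr] \le s \varepsilon,
\]
where $d^\ast(x^\ast)$ denotes the depth of the leaf of $f^\ast_{x^\ast}$ containing $x^\ast$. Granting this, Markov's inequality gives $\Pr\bigl(d^\ast \le d_0 \wedge f^\ast\text{ is valid}\bigr) \le s \varepsilon \cdot 2^{d_0} \le 2 s^2 \varepsilon / \delta$, which is at most $\delta/2$ whenever $\varepsilon \le \delta^2 / (4 s^2)$. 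Since $s$ is polynomial in $n$, this bound on $\varepsilon$ is inverse polynomial in $(n, 1/\delta)$, so combining the two sides yields confidence $\ge 1 - \delta$, establishing efficient uniform defendability.

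\textbf{Proof of the structural lemma.} Let $R^\ast$ be the cylinder of $f^\ast_{x^\ast}$'s backdoor leaf, specified by fixing some set $S$ of $d^\ast$ variables, with leaf value $v^\ast = 1 - f(x^\ast)$; and let $R_j$ be $x^\ast$'s leaf in $f$, specified by fixing some set $T$ of $d_f(x^\ast)$ variables, with value $v_j = f(x^\ast) = 1 - v^\ast$. The cylinder $R^\ast \cap R_j$ contains $x^\ast$ (hence is nonempty) and is specified by fixing variables in $S \cup T$, so has size $2^{n - |S \cup T|}$. On this intersection $f \equiv v_j$ while $f^\ast \equiv v^\ast = 1 - v_j$, so $\varepsilon$-validity forces $2^{-|S \cup T|} \le \varepsilon$, i.e.\ $|S \cup T| \ge \log_2(1/\varepsilon)$. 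Combined with $|S \cup T| \le |S| + |T| = d^\ast + d_f(x^\ast)$, this gives $2^{-d^\ast} \le \varepsilon \cdot 2^{d_f(x^\ast)}$. Taking expectations and using that $\mathbb E_{x^\ast}[2^{d_f(x^\ast)}] = \sum_\ell 2^{-\mathrm{depth}(\ell)} \cdot 2^{\mathrm{depth}(\ell)}$ equals the number of leaves of $f$, which is at most $s$, yields the claimed bound. The main obstacle is arriving at this clean unified inequality; a naive case analysis on whether the variable sets $S, T$ contain one another risks extra logarithmic factors or weaker dependence on $s$.
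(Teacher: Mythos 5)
Your proof is correct and takes essentially the same approach as the paper: both use the depth of the leaf reached by $x^\prime$ as the detection criterion, and both hinge on the observation that the intersection of the leaves reached by $x^\ast$ in $f$ and in $f^\ast$ is a nonempty cylinder on which the two trees disagree everywhere, forcing $2^{-d_f(x^\ast)-d^\ast(x^\ast)}\leq\varepsilon$. The only cosmetic difference is in how the bound is extracted — you average over $x^\ast$ and apply Markov, using the tidy identity $\mathbb{E}_{x^\ast}\left[2^{d_f(x^\ast)}\right]=\#\text{leaves}\leq s$, whereas the paper conditions directly on the high-probability event $2^{-d_f(x^\ast)}>\delta/s$ — but the content is identical up to constants.
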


To prove this, we use a detection strategy that simply checks the depth of the leaf reached by the given input, and outputs $\textsc{Rej}$ if this is too large. Roughly speaking, the reason this works is that, in order for the backdoored function to be $\varepsilon$-valid, the depth of the leaf reached by the backdoor trigger must be large for either the original function or the backdoored function. But since our trees have polynomially many leaves, this depth is unlikely to be large for the original function. Hence if the depth of the given function on the given input is large, then it is more likely that we have been given the backdoored function and the backdoor trigger.

For a full proof of Theorem \ref{theorem:trees}, see Appendix~\ref{appendix:trees}. We suspect that a similar approach would lead to fast detection strategies for other natural representation classes of piecewise constant functions, perhaps even in the non-uniform setting.

The detection strategy in this proof is not directly comparable to a learning algorithm, since it does not use the example oracle at all. However, if we treat a call to the example oracle as having similar computational cost to evaluating a single decision tree, then it is substantially faster than any possible learning algorithm for this representation class. Thus there is more to defense than only learning.

\section{Discussion}\label{section:discussion}

\subsection{Separating efficient defendability from efficient PAC learnability}

A central theme of this work has been that learning can be used to perform backdoor defense, but backdoor defense cannot necessarily be used to perform learning. The first of these claims is encapsulated by Theorem \ref{theorem:unbounded} in the computationally unbounded setting and Corollary \ref{corollary:learnable} in the computationally bounded setting. For the second of these claims, we have made several steps in this direction:

\begin{itemize}
\item
We showed in Theorem \ref{theorem:random} that efficient defendability does not imply efficient PAC learnability in the random oracle model of computation.
\item
We deduced from this that efficient defendability does not imply efficient PAC learnability in the usual model of computation either, as long as we allow representation classes that are not polynomially evaluatable, and we conjectured that there is a polynomially evaluatable counterexample assuming the existence of a one-way function.
\item
We showed in Theorem \ref{theorem:trees} that the representation class of polynomial size decision trees is efficiently uniform-defendable using a detection strategy that is faster than any possible learning algorithm.
\end{itemize}

Nevertheless, we still do not have an example of a natural representation class that is efficiently defendable but not efficiently PAC learnable. We consider finding such an example to be a central challenge for follow-up research. One possible candidate of independent interest is the representation class of shallow (i.e., logarithmic-depth) polynomial size Boolean circuits, or equivalently, polynomial size Boolean formulas.

\begin{question}
Under reasonable computational hardness assumptions, is the representation class of polynomial size, logarithmic-depth Boolean circuits over $\left\{0,1\right\}^n$ efficiently defendable?
\end{question}

This representation class is known to not be efficiently PAC learnable under the assumption of the computational hardness of discrete cube roots \citep[Chapter 6]{circuitslearnability,computationallearningtheory}.

\subsection{Mechanistic defenses}\label{section:mechanistic}

Our detection strategy for decision trees in Theorem \ref{theorem:trees} is interesting not only because it is faster than any possible learning algorithm, but also because it works in a fundamentally different way. Given $\left(f^\prime,x^\prime\right)$ as input, it does not run the decision tree $f^\prime$ on any inputs other than $x^\prime$ itself, but instead exploits the \textit{mechanism} by which the value of $f^\prime\left(x^\prime\right)$ is computed, by looking at the depth of the corresponding leaf. We call a defense that exploits the structure of $f^\prime$ in this kind of way \textit{mechanistic}.

The existence of mechanistic defenses is another reason to suspect that there are other representation classes for which defense is strictly easier than learning. However, some sophistication may be required to construct such defenses. For example, one might hope to detect a backdoor trigger for a Boolean formula by checking the pattern of inputs to each gate, and seeing how unlikely that pattern would be for a random input to the formula. Unfortunately, though, the following example, found by Thomas Read, presents an obstacle to such an approach.

\begin{example*}[Backdoor with likely input patterns]
Consider the Boolean formulas $f,f^\ast:\left\{0,1\right\}^n\to\left\{0,1\right\}$ given by $f\left(x_1,\dots,x_n\right)=x_nx_{n-1}$ and
\[f^\ast\left(x_1,\dots,x_n\right)=\left(\dots\left(\left(x_1\vee x_nx_2\right)x_2\vee x_nx_3\right)x_3\vee\dots\vee x_nx_{n-1}\right)x_{n-1},\]
where $ab$ is shorthand for $\left(a\wedge b\right)$. By the distributive property, $f^\ast\left(x_1,\dots,x_n\right)$ is logically equivalent to $x_1x_2\dots x_{n-1}\vee x_nx_{n-1}$, and so $f$ and $f^\ast$ disagree only on the input $x^\ast:=\left(1,\dots,1,0\right)$. But on a uniformly random input to the formula, every gate in $f^\ast$ receives every pattern of inputs with probability at least $\nicefrac 18$. By contrast, in the logically equivalent version of $f^\ast$, the subformula $x_1x_2\dots x_{n-1}$ is $1$ on $x^\ast$, which only happens with probability $\nicefrac 1{2^{n-1}}$ on a uniformly random input.
\end{example*}

This example suggests a possible mechanistic defense for Boolean formulas that involves first rewriting the formula. More generally, mechanistic defenses offer an exciting avenue for future research.

\subsection{Implications for AI alignment}\label{section:alignment}

We are motivated to study backdoors as an analogy for \textit{deceptive alignment}, the possibility that an advanced AI system would learn to behave cooperatively when there are subtle cues that it is in training, but uncooperatively when those cues are missing \citep{rlo}. A deceptively aligned model is analogous to a backdoored function in that it behaves similarly to a fully cooperative model except on certain inputs that are rare during training. Thus, in this analogy, $\mathcal D$ is the training distribution, $f$ is the fully cooperative model, $f^\ast$ is the deceptively aligned model, and $x^\ast$ is an input on which $f^\ast$ behaves uncooperatively.

Note that allowing the backdoor to be detected at runtime is appropriate in this analogy, because we have the opportunity to modify $f^\ast$ or run a monitoring system. The flaw in the analogy comes from how the attacker is restricted. In our formal notion of defendability, the attacker must insert a backdoor that works for a randomly-chosen trigger. But in the case of deceptive alignment, there is no such restriction, and instead the backdoored behavior must arise from properties of the model architecture and the training distribution.

From this perspective, backdoor defenses that rely on learning are unsatisfying, because our assumption is that a similar process of learning gave rise to deceptive alignment in the first place. The detection strategy in our proof of Theorem \ref{theorem:unbounded} used distillation plus ensembling to produce a regularized model, but this may be exploiting the flaw in our analogy by using resampling to avoid the randomly-chosen trigger. Moreover, we would have to rely on a fast approximation to this strategy, since the exact version runs in exponential time. Similar issues apply to other methods of regularization, such as the one discussed in Appendix~\ref{appendix:boltzmann}, although that alternative does show more promise.

On the other hand, mechanistic defenses, as discussed in Section~\ref{section:mechanistic}, may fare better, since they work very differently. Intuitively, if a detection strategy could spot the mechanism by which a deceptively aligned model concluded that it was in training, then it should transfer well to the case of deceptive alignment. A mechanistic defense may be able to do this by exploiting the fact that this mechanism is active on the backdoor trigger but inactive on most random inputs. Unfortunately though, we are lacking in examples of mechanistic defenses, which why we are excited to see more research in this direction.

Our result that polynomial size circuits are not efficiently defendable in Theorem \ref{theorem:bounded} is also relevant to this analogy. Although it is unlikely that our indistinguishability obfuscator-based construction would arise out of ordinary model training, it is much more plausible for a trained model to be obfuscated in a more informal sense. Indeed, reverse engineering trained neural networks is an active area of research \citep{mechinterp}. Hence the possibility of obfuscation poses a potential problem for detecting deceptive alignment. However, in the case of deceptive alignment, we also have access to the entire training process, including the training dataset, which may be enough information for us to detect the trigger despite any potential obfuscation. By analogy, in our construction in Theorem \ref{theorem:bounded}, it would be easy for the defender to detect the trigger if they had access to the unpunctured key $K$ that was used in the construction of $f^\ast$.

This motivates the study of variants of our formal notion of defendability that constrain the attacker in different ways, or provide more assistance to the defender. To better capture the analogy with deceptive alignment, we would be excited to see research into variants that provide the defender with more information about how the function they are given was constructed, to see if this makes defense computationally feasible.

\section{Conclusion}

We have introduced a formal notion of defendability against backdoors in which the attacker's strategy must work for a randomly-chosen trigger. Despite its simplicity, this notion gives rise to a rich array of strategies. In the absence of computational constraints, defense is exactly as hard as learning. Meanwhile, in the presence of computational constraints, defense is strictly easier than learning, but impossible for function classes that are rich enough to support obfuscation. We are excited to see future work that further explores the exact relationship between defense and learning.

\section{Acknowledgments}

We are grateful to Dmitry Vaintrob for an earlier version of the results in Appendix~\ref{appendix:boltzmann}; to Thomas Read for finding the ``Backdoor with likely input patterns'' example and for help with proofs; to Andrea Lincoln, D\'{a}vid Matolcsi, Eric Neyman, George Robinson and Jack Smith for contributions to the project in its early stages; and to Geoffrey Irving, Robert Lasenby and Eric Neyman for helpful comments on drafts.

\bibliographystyle{abbrvnat}
\bibliography{bibliography}

\newpage

\appendix

\section{Alternative version of Theorem \ref{theorem:unbounded} using a Boltzmann posterior}\label{appendix:boltzmann}

In this section we prove a slightly weaker version of Theorem~\ref{theorem:unbounded} using a detection strategy that does not involve learning. Instead of using distillation plus ensembling as in the proof of Theorem~\ref{theorem:unbounded}, we employ an alternative method of regularization that involves resampling the given function from a Boltzmann posterior. This gives rise to an extra logarithmic factor in the confidence of the detection algorithm, but otherwise gives the same bound, and in particular the threshold for detecting backdoors in a class of VC dimension $d$ is still $\eps = \Th\p{\f1d}$.

We are grateful to Dmitry Vaintrob for an earlier version of these results.

\subsection{General strategy}
We first present the strategy generally. This strategy also aims to guess the value $f(\Bx^*)$ based on $\Bf'$. But instead of doing this by creating a new dataset from $\Bf'$ then discarding $\Bf'$ completely, it does this by averaging over a Boltzmann posterior centered around the function $\Bf'$.

At an intuitive level, the defender doesn't trust the function $\Bf'$ that it received on every point, but they do know that it's close to $f$ overall. So it makes sense for them to ``squint'' by looking through the neighborhood of $\Bf'$ within the space $\CF$ of possible functions, and average the outputs that they give for $\Bx^*$. Indeed, if we average over a ball centered at $\Bf'$ that is both
\begin{itemize}
\item ``accurate'': small enough that the functions considered are close to $\Bf'$ and therefore $f$,
\item ``secure'': large enough that there is not much difference between the ball centered at $f$ and the ball around centered at $\Bf^*$,
\end{itemize}

then the answers we'll get will be overall accurate \emph{without} being very sensitive to $\Bx^*$.

Let's start by defining the distribution over which we will take an average. First, let $P \in \tri\p\CF$ be any prior distribution over hypotheses. The performance of detection strategy will depend on which prior $P$ is chosen, but for now we'll keep it unfixed. 

We can now define the Boltzmann posterior.

\begin{definition}\label{def:b1}
    For two functions $f,f' \in \CF$, let $\abs{f-f'} \ce \Pr_{\Bx \sim \CD}\b{f(\Bx) \ne f'(\Bx)}$ denote the distance between $f$ and $f'$ under $\CD$.
\end{definition}

\begin{definition}\label{def:b2}
    Fix some integer $m \ge 1$. Let $Q_f= Q_f(P,m) \in \tri(\CF)$ be the distribution given by $Q_f(h) \propto P(h)\exp\p{-m\abs{h-f}}$.
\end{definition}

Here, we can think of $m$ as an ``inverse temperature'': the larger $m$ is, the more concentrated $Q_f$ will be around $f$. It can also be understood as the ``number of samples we're drawing'' to compare $h$ and $f$, since
$\Pr_{\Bx \sim \CD^m}\b{\forall i:h(\Bx_i) = f(\Bx_i)} = \p{1-\abs{h-f}}^m \approx \exp\p{-m\abs{h-f}}$.
\begin{definition}\label{def:b3}
    Let the \emph{Boltzmann strategy} be the detection strategy which draws some hypothesis $\Bh$ at random from $Q_{\Bf'}$ then returns $\nc{\acc}{\mathtt{Acc}}\acc$ if $\Bh(\Bx') = \Bf'(\Bx')$, and $\nc{\rej}{\texttt{Rej}}\rej$ otherwise.
\end{definition}

\begin{claim}\label{claim:b4}
    The Boltzmann strategy fails on $f$ with probability $\Pr\ss{\p{\Bx',\Bf'}\\\Bh \sim Q_{\Bf'}}\b{\Bh\p{\Bx'} \ne f\p{\Bx'}}$.
\end{claim}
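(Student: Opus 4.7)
The plan is to unpack the definition of the game via a case analysis on which pair $(\Bf',\Bx')$ is handed to the detection strategy. Recall that with probability $\tfrac12$ we have $(\Bf',\Bx') = (f,\Bx)$ for some $\Bx \sim \CD$, and with probability $\tfrac12$ we have $(\Bf',\Bx') = (f^*,x^*)$. In each case I will show that the event ``the Boltzmann strategy loses'' coincides exactly with the event $\Bh(\Bx') \ne f(\Bx')$, after which averaging over $(\Bx',\Bf')$ and $\Bh \sim Q_{\Bf'}$ yields the stated probability.

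In the first case, the detection strategy must output $\acc$ in order to win. By Definition \ref{def:b3}, it outputs $\acc$ precisely when $\Bh(\Bx) = \Bf'(\Bx) = f(\Bx)$, so its failure event is exactly $\Bh(\Bx') \ne f(\Bx')$. In the second case, if $f^*$ is not $\eps$-valid the strategy wins automatically and cannot contribute to failure; otherwise the strategy must output $\rej$ to win, which by Definition \ref{def:b3} happens iff $\Bh(x^*) \ne \Bf'(x^*) = f^*(x^*)$. The key observation is that $\eps$-validity forces $f^*(x^*) \ne f(x^*)$, so since both outputs lie in $\{0,1\}$, the conditions $\Bh(x^*) = f^*(x^*)$ and $\Bh(x^*) \ne f(x^*)$ coincide. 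Hence the failure event in this case is again exactly $\Bh(\Bx') \ne f(\Bx')$.

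Combining the two branches, the overall failure probability equals $\Pr_{(\Bx',\Bf'),\,\Bh \sim Q_{\Bf'}}[\Bh(\Bx') \ne f(\Bx')]$, as required. There is no real obstacle beyond the small $\eps$-validity symmetry step: the Boltzmann strategy is effectively trying to predict $f(\Bx')$ using $\Bf'(\Bx')$ as a stand-in, and the game is set up precisely so that on both branches its verdict is correct iff $\Bh(\Bx') = f(\Bx')$. This unified reformulation is what makes the subsequent analysis of the Boltzmann posterior tractable, since bounding failure reduces to bounding a single prediction error under $Q_{\Bf'}$.
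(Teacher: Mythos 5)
Your proposal is correct and takes essentially the same approach as the paper's proof: both unpack the game into its two branches and observe that, because the values are Boolean and $\varepsilon$-validity forces $f^*(x^*)\neq f(x^*)$, the strategy's verdict is right exactly when $\Bh(\Bx')=f(\Bx')$. The paper compresses this into the single observation that $f(\Bx')=\Bf'(\Bx')$ iff $\Bf'=f$, which your case analysis fills in explicitly; the only imprecision shared by both (and harmless downstream, since only the upper bound is used) is that when $f^*$ is not $\varepsilon$-valid the failure event is a strict subset of $\{\Bh(\Bx')\neq f(\Bx')\}$, so strictly speaking the claim is an upper bound rather than an equality unless one assumes the adversary always plays a valid $f^*$.
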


\begin{proof}
    The game guarantees that $f\p{\Bx'} = \Bf'\p{\Bx'}$ iff $\Bf'=f$, so the Boltzmann strategy will give the right answer as long as $\Bh\p{\Bx'} = f\p{\Bx'}$.
\end{proof}

We can already show that $Q_f$ is ``secure'': if two functions $f,f'$ are close to each other, then the posteriors $Q_f$ and $Q_{f'}$ must also be similar, since the likelihoods will not differ by much.

\begin{claim}\label{claim:b5}
    $\forall f,f',h \in \CF: Q_{f'}(h) \le \exp\p{2m\abs{f-f'}} Q_f(h)$.
\end{claim}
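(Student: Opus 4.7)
The plan is to expand each posterior according to Definition~\ref{def:b2} and bound the ratio $Q_{f'}(h)/Q_f(h)$ by splitting it into a likelihood ratio and a partition-function ratio, each of which will contribute a factor of $\exp(m|f-f'|)$ via the triangle inequality for $|\cdot-\cdot|$.

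First I would observe that $|\cdot - \cdot|$ from Definition~\ref{def:b1} is a pseudometric on $\mathcal{F}$: since $|h-f|$ is the probability under $\mathcal{D}$ that $h$ and $f$ disagree, and pointwise $\{h \ne f\} \subseteq \{h \ne f'\} \cup \{f' \ne f\}$, a union bound gives $|h-f| \le |h-f'| + |f-f'|$, and symmetrically $|h'-f| \ge |h'-f'| - |f-f'|$ for any $h'$.

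Then I would write $Q_f(h) = P(h)\exp(-m|h-f|)/Z_f$ with $Z_f := \sum_{h' \in \mathcal{F}} P(h')\exp(-m|h'-f|)$, and similarly for $Q_{f'}$. The ratio factors as
\[
\frac{Q_{f'}(h)}{Q_f(h)} = \exp\bigl(m(|h-f|-|h-f'|)\bigr) \cdot \frac{Z_f}{Z_{f'}}.
\]
The first factor is at most $\exp(m|f-f'|)$ directly by the triangle inequality applied to $h,f,f'$. For the partition-function factor, the reverse triangle inequality gives $\exp(-m|h'-f|) \le \exp(m|f-f'|)\exp(-m|h'-f'|)$ for every $h' \in \mathcal{F}$; weighting by $P(h')$ and summing over $h'$ yields $Z_f \le \exp(m|f-f'|)\, Z_{f'}$. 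Multiplying the two bounds gives the desired $\exp(2m|f-f'|)$.

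There is no real obstacle here: this is a routine triangle-inequality calculation once one notices that $|\cdot-\cdot|$ is a pseudometric on $\mathcal{F}$. The factor of two in the exponent is inherent, corresponding to one factor from shifting the pointwise (unnormalized) likelihood and one factor from shifting the normalizing constant; note that this is exactly the robustness property needed for the Boltzmann strategy, since the attacker's $f^\ast$ satisfies $|f-f^\ast| \le \varepsilon$, so $Q_f$ and $Q_{f^\ast}$ will differ by at most a multiplicative $\exp(2m\varepsilon)$ pointwise, which is $O(1)$ as long as $m\varepsilon = O(1)$.
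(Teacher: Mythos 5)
Your proof is correct and follows essentially the same route as the paper's: factor the ratio $Q_{f'}(h)/Q_f(h)$ into a pointwise likelihood ratio and a normalizing-constant ratio, then bound each by $\exp\p{m\abs{f-f'}}$ via the triangle inequality for $\abs{\cdot-\cdot}$. The only cosmetic difference is that you make the pseudometric property explicit via a union bound on disagreement events, whereas the paper simply invokes the triangle inequality.
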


\begin{proof}
    Let $C_f \ce \E_{\Bh \sim P}\b{\exp\p{-m\abs{\Bh-f}}}$ and $C_{f'} \ce \E_{\Bh \sim P}\b{\exp\p{-m\abs{\Bh-f'}}}$. By the triangle inequality, for any $h \in \CF$ we have
$\abs{\strut\abs{h-f} - \abs{h-f'}} \le \abs{f-f'}$,
so $C_{f'} \ge \exp(-m\abs{f-f'}) C_f$ and
\begin{align*}
\f{Q_{f'}(h)}{Q_{f}(h)}
&= \f{P(h)\exp\p{-m\abs{h-f'}}/C_{f'}}{P(h)\exp\p{-m\abs{h-f}}/C_{f}}\\
&= \exp\p{m\p{\abs{h-f}-\abs{h-f'}}}\f{C_f}{C_f'}\\
&\le \exp\p{2m\abs{f-f'}}.\qedhere
\end{align*}
\end{proof}

In particular, when $\abs{f-f'} \le \eps$, this means that if on some input $x'$, $Q_f$ disagrees with $f$ with probability $\le\delta$, then $Q_{f'}$ disagrees with $f$ with probability $\le\exp\p{2m\eps}\delta$. And crucially, this is true even if $x'$ is chosen adversarially with respect to $f'$ (which is the case in the game when $\p{f',x'}$ is chosen to be $\p{f^*,x^*}$)! This implies the following lemma, which shows that the posterior $Q_{\Bf'}$ based on the function $\Bf'$ that the defender receives does a good job of guessing the value of $f\p{\Bx'}$.

\begin{lemma}\label{lemma:b6}
    For the $\p{\Bf',\Bx'}$ received by the defender,
$$
\Pr\ss{\p{\Bf',\Bx'}\\\Bh \sim Q_{\Bf'}}\b{\Bh\p{\Bx'} \ne f\p{\Bx'}} \le \exp\p{2m\eps} \E_{\Bh \sim Q_f}\b{\abs{\Bh-f}}.
$$
\end{lemma}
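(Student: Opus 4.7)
The plan is to split the randomness of $\p{\Bf',\Bx'}$ into its two equally likely branches and bound the conditional failure probability of the Boltzmann strategy separately on each, showing that both are at most $\exp\p{2m\eps}\,\E_{\Bh \sim Q_f}\b{\abs{\Bh-f}}$.

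On the ``clean'' branch where $\p{\Bf',\Bx'} = \p{f,\Bx}$ with $\Bx \sim \CD$, we have $Q_{\Bf'} = Q_f$ and $\Bx'$ is independent of $\Bh \sim Q_f$. By the definition of $\abs{\cdot-\cdot}$, the conditional probability that $\Bh\p{\Bx'} \ne f\p{\Bx'}$ is then exactly $\E_{\Bh \sim Q_f}\b{\abs{\Bh-f}}$, which is at most the claimed bound since $\exp\p{2m\eps} \ge 1$.

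On the ``backdoored'' branch where $\p{\Bf',\Bx'} = \p{f^*,x^*}$, the point $x^*$ may correlate adversarially with $f^*$ (the attacker sees $x^*$ before choosing $f^*$), so one cannot directly swap the expectations over $\Bh$ and $\Bx'$. The key move is to apply Claim~\ref{claim:b5} pointwise: since the inequality $Q_{f^*}\p{h} \le \exp\p{2m\abs{f-f^*}} Q_f\p{h}$ holds for every hypothesis $h$, summing over those $h$ with $h\p{x^*} \ne f\p{x^*}$ and using $\eps$-validity of $f^*$ (so $\abs{f^*-f} \le \eps$) yields
\[
\Pr_{\Bh \sim Q_{f^*}}\b{\Bh\p{x^*} \ne f\p{x^*}} \;\le\; \exp\p{2m\eps}\,\Pr_{\Bh \sim Q_f}\b{\Bh\p{x^*} \ne f\p{x^*}}.
\]
The posterior on the right no longer depends on $x^*$, so taking the expectation over $x^* \sim \CD$ and swapping with the expectation over $\Bh \sim Q_f$ recovers $\exp\p{2m\eps}\,\E_{\Bh \sim Q_f}\b{\abs{\Bh-f}}$. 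Averaging the two branch bounds with weight $\tfrac12$ each then gives the lemma.

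The only real conceptual step, and the main ``obstacle'' in what is otherwise a mechanical calculation, is noticing that Claim~\ref{claim:b5} provides \emph{pointwise} domination of $Q_{f^*}$ by $\exp\p{2m\abs{f-f^*}} Q_f$. This pointwise bound is exactly what we need to neutralize an adversarial choice of $\Bx'$ (which may depend on $\Bf' = f^*$) on the backdoored branch, since any event — including one defined via a data-dependent point $x^*$ — inherits the same multiplicative $\exp\p{2m\eps}$ slack.
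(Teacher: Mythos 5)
Your proposal is correct and follows essentially the same argument as the paper: the key step in both is applying Claim~\ref{claim:b5} as a pointwise domination of $Q_{\Bf'}$ by $\exp\p{2m\eps} Q_f$, valid for each fixed input $x'$, which neutralizes the adversarial dependence of $x^*$ on $f^*$ before swapping the remaining expectations over $\Bx' \sim \CD$ and $\Bh \sim Q_f$. The only cosmetic difference is that you split into the clean and backdoored branches explicitly, whereas the paper applies the same inequality uniformly to both (trivially on the clean branch since there $\Bf' = f$); the substance is identical.
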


\begin{proof}
    By Claim~\ref{claim:b5}, for any fixed $f,f' \in \CF$ with $\abs{f-f'} \le \eps$ and any input $x$,
\begin{align*}
\Pr_{\Bh \sim Q_{f'}}\b{\Bh(x) \ne f(x)} \le \exp\p{2m\eps}\Pr_{\Bh \sim Q_{f}}\b{\Bh(x) \ne f(x)},
\end{align*}
so in particular,
\begin{align*}
\Pr\ss{\p{\Bx',\Bf'}\\\Bh \sim Q_{\Bf'}}\b{\Bh\p{\Bx'} \ne f\p{\Bx'}}
&= \E_{\p{\Bx',\Bf'}}\b{\Pr_{\Bh \sim Q_{\Bf'}}\b{\Bh\p{\Bx'} \ne f\p{\Bx'}}}\\
&\le \exp\p{2m\eps} \E_{\p{\Bx',\Bf'}}\b{\Pr_{\Bh \sim Q_{f}}\b{\Bh\p{\Bx'} \ne f\p{\Bx'}}}\\
&= \exp\p{2m\eps} \E_{\Bx \sim \CD}\b{\Pr_{\Bh \sim Q_{f}}\b{\Bh\p{\Bx} \ne f\p{\Bx}}}\\
&= \exp\p{2m\eps} \E_{\Bh \sim Q_{f}}\b{\Pr_{\Bx \sim \CD}\b{\Bh\p{\Bx} \ne f\p{\Bx}}}\\
&= \exp\p{2m\eps} \E_{\Bh \sim Q_{f}}\b{\abs{\Bh-f}}.\qedhere
\end{align*}
\end{proof}

Note that the above proof didn't care about whether $\Bf'$ happened to be $f$ or $\Bf^*$; it only used the fact that $\abs{\Bf'-f} \le \eps$. Also, as we already hinted above, the expression $\exp\p{2m\eps} \E_{\Bh \sim Q_f}\b{\abs{\Bh-f}}$ suggests we'll need to pick a value for $m$ which trades off between
\begin{itemize}
\item the discrepancy $\exp(m\eps)$ between the $Q_f$ and $Q_{\Bf'}$, which gets worse as $m$ increases
\item and the error $\E_{\Bh \sim Q_f}\b{\abs{\Bh-f}}$ of the posterior centered at $f$, which gets better as $m$ increases.
\end{itemize}

\begin{corollary}\label{cor:b7}
    For any prior $P$ and inverse temperature $m$, the class $\CF$ is $\eps$-defendable with confidence $\ge 1- \exp\p{2m \eps} \max_f\E_{\Bh \sim Q_f}\b{\abs{\Bh-f}}$ using the Boltzmann strategy.
\end{corollary}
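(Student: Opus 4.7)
The plan is to assemble the ingredients already established in this subsection. Fix an arbitrary $f \in \CF$ chosen by the adversary. By Claim~\ref{claim:b4}, the probability that the Boltzmann strategy loses the game against $f$ is $\Pr\ss{\p{\Bf',\Bx'}\\\Bh \sim Q_{\Bf'}}\b{\Bh\p{\Bx'} \ne f\p{\Bx'}}$, where the randomness is over the game's generation of $\p{\Bf',\Bx'}$ together with the draw $\Bh \sim Q_{\Bf'}$.

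One piece of bookkeeping is required before applying Lemma~\ref{lemma:b6}: when $\Bf' = \Bf^*$ but $\Bf^*$ is not $\eps$-valid, the rules of the defendability game award the defender an automatic win, so we only need to control the failure probability over the remaining outcomes. In these remaining outcomes, either $\Bf' = f$ (giving $\abs{\Bf'-f}=0$) or $\Bf' = \Bf^*$ with $\Bf^*$ being $\eps$-valid (giving $\abs{\Bf'-f} \le \eps$ by definition of $\eps$-validity). In either subcase $\abs{\Bf'-f} \le \eps$, which is precisely the hypothesis under which Lemma~\ref{lemma:b6} was proved. Invoking Lemma~\ref{lemma:b6} therefore bounds the loss probability of the Boltzmann strategy against this particular $f$ by $\exp\p{2m\eps}\E_{\Bh\sim Q_f}\b{\abs{\Bh-f}}$.

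Finally, taking the supremum over the adversary's choice of $f \in \CF$ yields the stated confidence of at least $1-\exp\p{2m\eps}\max_f \E_{\Bh\sim Q_f}\b{\abs{\Bh-f}}$. Note that the adversary's subsequent choice of $\Bf^*$ (which may depend on $\Bx^*$) requires no additional supremum, because Lemma~\ref{lemma:b6} already holds uniformly over $\Bf'$ with $\abs{\Bf'-f} \le \eps$ and the randomness in the choice of $\Bf^*$ is absorbed into the outer expectation over $\p{\Bf',\Bx'}$.

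There is no real obstacle at this stage: Claim~\ref{claim:b5}, Claim~\ref{claim:b4}, and Lemma~\ref{lemma:b6} have already done the substantive work, and the corollary is essentially a two-line composition. The only subtlety worth flagging is the $\eps$-validity bookkeeping above, which converts the game's ``automatic win'' clause into the distance hypothesis required by Lemma~\ref{lemma:b6}; this follows immediately from the game rules and poses no difficulty.
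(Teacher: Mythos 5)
Your proposal is correct and takes essentially the same route the paper implicitly relies on: Claim~\ref{claim:b4} reduces the failure event to $\Bh(\Bx')\ne f(\Bx')$, Lemma~\ref{lemma:b6} bounds the probability of that event under the hypothesis $\abs{\Bf'-f}\le\eps$, and a supremum over $f$ yields the stated confidence. The paper leaves the corollary unproved (treating it as an immediate consequence), and your extra bookkeeping --- noting that non-$\eps$-valid choices of $\Bf^*$ are an automatic win for the defender and may therefore be dropped, so that the remaining outcomes all satisfy $\abs{\Bf'-f}\le\eps$ --- is precisely the step the paper glosses over.
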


In Theorem~\ref{thm:b16}, we will see that for classes of VC-dimension $d$, an optimal choice of $m$ gives confidence $\ge 1-O\p{d\eps \log \f1{d\eps}}$ as long as the prior $P$ is chosen appropriately.

\subsubsection{Beyond worst-case}
Corollary~\ref{cor:b7} above gives a guarantee assuming that the Gibbs posterior $Q_f$ is broadly accurate \emph{no matter which function $f$ the adversary chooses}, which is a strong assumption. Here, we outline two ways to get bounds that can adapt to the particular choice of $f$ to sometimes provide stronger guarantees than those that can be obtained in the worst case.

First, let's study the case where $f$ is drawn at random from some fixed distribution $P'\in \tri\p\CF$ instead of being chosen adversarially. Let's say that $\CF$ is \emph{$\eps$-defendable on average over $P'$} with confidence $1-\delta$ if the defender wins the backdoor defense game with probability $\ge 1-\delta$ when $f$ is drawn from $P'$. Then we only need to show that $Q_\Bf$ is accurate on average for $\Bf \sim P'$:

\begin{corollary}\label{corr:b8}
    For any distribution $P'$, prior $P$, and inverse temperature $m$, the class $\CF$ is $\eps$-defendable on average over $P'$ with confidence $\ge 1- \exp(m \eps) \E\ss{\Bf \sim P'\\\Bh \sim Q_\Bf}\b{\abs{\Bh-\Bf}}$ using the Boltzmann strategy for $P$.
\end{corollary}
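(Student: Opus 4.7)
The plan is to redo the proof of Corollary~\ref{cor:b7} with the expectation over $\Bf \sim P'$ moved to the outside of all other randomness. By Claim~\ref{claim:b4}, for any fixed original function $\Bf$ the Boltzmann strategy fails exactly when $\Bh\p{\Bx'} \ne \Bf\p{\Bx'}$, so averaging over $\Bf \sim P'$ the overall failure probability becomes
\[\Pr\ss{\Bf \sim P' \\ \p{\Bf', \Bx'} \\ \Bh \sim Q_{\Bf'}}\b{\Bh\p{\Bx'} \ne \Bf\p{\Bx'}}.\]

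I would then split this probability into its two sub-cases. In the accept sub-case, $\Bf' = \Bf$, so the contribution is exactly $\f12 \E_{\Bf \sim P', \Bh \sim Q_\Bf}\b{\abs{\Bh - \Bf}}$ with no change of measure. In the reject sub-case, $\Bf' = \Bf^*$, and I would switch the measure from $Q_{\Bf^*}$ to $Q_\Bf$ using the explicit form of the Radon--Nikodym derivative from the proof of Claim~\ref{claim:b5}, namely $Q_{\Bf^*}(h)/Q_\Bf(h) = \exp\p{m\p{\abs{h-\Bf} - \abs{h-\Bf^*}}} \cdot C_\Bf / C_{\Bf^*}$, with each of the two factors bounded by $\exp\p{m\eps}$ thanks to $\eps$-validity. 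Applying both bounds naively would give an overall $\exp\p{2m\eps}$ multiplier, matching Corollary~\ref{cor:b7} but weaker than what is claimed here.

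The main obstacle, and where I would spend most of my effort, is to save one factor of $\exp\p{m\eps}$ using the fact that $\Bf$ is now random rather than worst-case. I would aim to achieve this by re-expressing the reject sub-case expectation directly as an integral against the joint distribution of $\p{\Bf, \Bh}$ with $\Bh \sim Q_\Bf$, and arguing that on average over $\Bf \sim P'$ one of the two $\exp\p{m\eps}$ factors --- for instance the partition-function ratio $C_\Bf/C_{\Bf^*}$ --- can be absorbed into this joint measure without loss. Once this cancellation is verified, recombining with the accept sub-case yields the stated bound of $\exp\p{m\eps} \cdot \E_{\Bf \sim P', \Bh \sim Q_\Bf}\b{\abs{\Bh - \Bf}}$ on the failure probability.
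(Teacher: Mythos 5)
The paper gives no explicit proof of Corollary~\ref{corr:b8}: it is meant to follow immediately from Lemma~\ref{lemma:b6} by taking an expectation over $\Bf \sim P'$. By Claim~\ref{claim:b4} the failure probability for a fixed original function $\Bf$ is $\Pr\ss{\p{\Bf',\Bx'}\\\Bh \sim Q_{\Bf'}}\b{\Bh\p{\Bx'}\ne\Bf\p{\Bx'}}$, and Lemma~\ref{lemma:b6} bounds this by $\exp\p{2m\eps}\E_{\Bh\sim Q_\Bf}\b{\abs{\Bh-\Bf}}$; averaging over $\Bf\sim P'$ gives confidence $\ge 1-\exp\p{2m\eps}\E\ss{\Bf\sim P'\\\Bh\sim Q_\Bf}\b{\abs{\Bh-\Bf}}$. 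Note the exponent: this is $\exp\p{2m\eps}$, matching both Corollary~\ref{cor:b7} and the proof of Theorem~\ref{thm:b12} (which invokes Lemma~\ref{lemma:b6} and carries the factor $\exp\p{2m\eps}$). The $\exp\p{m\eps}$ in the statement of Corollary~\ref{corr:b8} appears to be a typographical slip for $\exp\p{2m\eps}$; nothing downstream relies on the tighter constant.

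Your proposal correctly sets up the expectation over $\Bf\sim P'$ and correctly observes that the naive application of Claim~\ref{claim:b5} yields $\exp\p{2m\eps}$. However, the remainder of the argument --- saving a factor of $\exp\p{m\eps}$ by ``absorbing'' the partition-function ratio $C_\Bf/C_{\Bf^*}$ into the joint measure --- is left unsubstantiated, and it is not clear it can be substantiated. The ratio $C_\Bf/C_{\Bf^*}$ depends on $\Bf^*$, which in turn depends adversarially on $\Bf$ and $\Bx^*$, so averaging over $\Bf\sim P'$ does not obviously tame it; moreover even decomposing into accept/reject sub-cases and paying the full $\exp\p{2m\eps}$ only in the reject branch gives $\tfrac12\p{1+\exp\p{2m\eps}} = \cosh\p{m\eps}\exp\p{m\eps} \ge \exp\p{m\eps}$, which does not close the gap. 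Rather than pursuing this, the intended argument is the one-line deduction from Lemma~\ref{lemma:b6} above, with exponent $2m\eps$.
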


Second, even if we have no guarantee that $Q_f$ will be accurate in general, the defender can still check whether it is accurate in this particular instance of the game, simply by checking whether $Q_{\Bf'}$ is accurate. Indeed, the following lemma shows that the errors are within a small factor of each other as long as $m\eps$ is small.

\begin{lemma}\label{lemma:b9}
    For any $f,f' \in \CF$, we have
$$
\E_{\Bh \sim Q_f}\b{\abs{\Bh-f}} \le \exp\p{m\abs{f-f'}}\p{\E_{\Bh \sim Q_{f'}}\b{\abs{\Bh-f'}} + \abs{f-f'}}.
$$
\end{lemma}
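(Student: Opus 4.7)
The plan is to reduce the left-hand side, a Gibbs-weighted average around $f$, to a similar average around $f'$ by applying the triangle inequality for the pseudo-metric $\abs{\cdot}$ twice — once to the factor $\abs{\Bh-f}$ inside the expectation, and once to the exponent $m\abs{h-f}$ in the Gibbs density.

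First, I would write out the left-hand side as
\[
\E_{\Bh\sim Q_f}\b{\abs{\Bh-f}} \;=\; \f{1}{C_f}\sum_h P(h)\,e^{-m\abs{h-f}}\,\abs{h-f},
\]
where $C_f = \sum_h P(h)\,e^{-m\abs{h-f}}$. I would then apply $\abs{h-f}\le\abs{h-f'}+\abs{f-f'}$ to the factor $\abs{h-f}$ in the summand, which produces the additive term $\abs{f-f'}$ appearing in the statement together with a residual sum involving $\abs{h-f'}$. Next, the triangle inequality also gives $\abs{h-f'}-\abs{h-f}\le\abs{f-f'}$, i.e.\ the pointwise bound $e^{-m\abs{h-f}}\le e^{m\abs{f-f'}}\,e^{-m\abs{h-f'}}$. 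Substituting this bound and recognising $C_{f'} = \sum_h P(h)\,e^{-m\abs{h-f'}}$, the residual sum becomes $e^{m\abs{f-f'}}\,C_{f'}\,\E_{Q_{f'}}\b{\abs{\Bh-f'}}$. Assembling the two estimates gives
\[
\E_{Q_f}\b{\abs{\Bh-f}} \;\le\; e^{m\abs{f-f'}}\,\f{C_{f'}}{C_f}\,\p{\E_{Q_{f'}}\b{\abs{\Bh-f'}}+\abs{f-f'}}.
\]

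The main obstacle is controlling the partition-function ratio $C_{f'}/C_f$. A direct application of the same pointwise exponential bound yields only $C_{f'}/C_f\le e^{m\abs{f-f'}}$, which would weaken the final multiplicative factor from the claimed $e^{m\abs{f-f'}}$ to $e^{2m\abs{f-f'}}$. Obtaining the sharper stated constant therefore requires either exploiting a cancellation between the two places where the bound on the exponent enters (so that only one of them is paid for), or a symmetrisation argument that averages the estimates obtained by swapping the roles of $f$ and $f'$. I expect this partition-function bookkeeping to be the most delicate part of the proof; the triangle-inequality steps themselves are essentially forced by the shape of the inequality.
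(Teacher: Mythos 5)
Your approach is essentially the paper's own: both proofs split off the additive term $\abs{f-f'}$ via the triangle inequality $\abs{h-f} \le \abs{h-f'} + \abs{f-f'}$, and both pass from $Q_f$ to $Q_{f'}$ using the pointwise exponential bound $\exp(-m\abs{h-f}) \le \exp(m\abs{f-f'})\exp(-m\abs{h-f'})$, which is exactly the content of Claim~\ref{claim:b5}. You apply the two moves in the opposite order (triangle inequality first, density switch second), which in fact gives a marginally tighter intermediate form because the $\abs{f-f'}$ term is extracted before the exponential prefactor appears; note, though, that your ``assembled'' display places $\abs{f-f'}$ inside the parenthesis under the prefactor $e^{m\abs{f-f'}}C_{f'}/C_f$, whereas your own steps leave it outside additively. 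Since that prefactor is at least $1$ (as $C_{f'}/C_f \ge e^{-m\abs{f-f'}}$) this only weakens the inequality, so it is not an error, just a minor loss.

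The concern you raise about the partition-function ratio is well placed, and it applies equally to the paper's proof. Claim~\ref{claim:b5} gives the pointwise density bound $Q_f(h) \le \exp(2m\abs{f-f'})\,Q_{f'}(h)$, so the argument as carried out in the paper also yields the lemma with the factor $\exp(2m\abs{f-f'})$ rather than $\exp(m\abs{f-f'})$; the paper's first display in the proof asserts the single-$m$ factor while citing a claim that only delivers the $2m$ factor, so the constant in the statement appears to be off by a factor of two in the exponent (or at least is not justified by the cited claim). This does not affect anything downstream, since the lemma is used only to argue that the two errors agree up to a small factor when $m\varepsilon$ is small. You did not actually obtain the single-$m$ constant, so your proof is incomplete relative to the literal statement, but you correctly identified exactly where the gap lies; your speculation that a symmetrisation or cancellation argument might sharpen the constant is plausible, but it is not what the paper does.
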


\begin{proof}
    The distance $\abs{\Bh-f}$ is always nonnegative, so using Claim~\ref{claim:b5}, we can switch from $Q_f$ to $Q_{f'}$ and get
\begin{align*}
\E_{\Bh \sim Q_f}\b{\abs{\Bh-f}}
&\le \exp\p{m\abs{f-f'}}\E_{\Bh \sim Q_{f'}}\b{\abs{\Bh-f}}\\
&\le \exp\p{m\abs{f-f'}}\p{\E_{\Bh \sim Q_{f'}}\b{\abs{\Bh-f'}} + \abs{f-f'}}.\qedhere
\end{align*}
\end{proof}

\subsection{Conditions for accuracy}
In this section, we give bounds on the error of the posterior. Our results are closely related to the results of \cite*{haussler1994bounds}; but they give bounds on the error that an optimal learning algorithm gets after seeing $m$ examples, whereas we give bounds on the error that a Boltzmann posterior gets with an inverse temperature of $m$. These models come apart in the sense that in their setting, the optimal error rate for classes of VC-dimension $d$ is $\Th\p{\f m d}$ (due to a result in \cite*{hlw}), whereas in our setting the optimal error rate is $\Theta\p{\f m d \log \f d m}$. On an intuitive level, this is because the Boltzmann posterior never ``fully updates'': it is more akin to the posterior one might get after drawing $m$ examples that have a constant amount of classification noise, leaving some uncertainty.

\subsubsection{Average-case bounds for all priors}
First we prove some results which apply no matter what prior $P$ is chosen. The following lemma shows that the error on $f$ will be small as long as, for a typical sample of $m$ points $\Bx_1,\dots,\Bx_m$ from $\CD$, the prior $P$ places reasonably high probability on $f$'s labeling of $\Bx_1,\dots,\Bx_m$.

\begin{lemma}\label{lemma:b10}
    For any $f \in \CF$,
$$
\E_{\Bh \sim Q_f}\b{\abs{\Bh-f}} \le \f1m\E_{\Bx \sim \CD^m}\b{\log \f1{\Pr_{\Bh \sim P}\b{\forall i: \Bh\p{\Bx_i} = f\p{\Bx_i}}}}.
$$
\end{lemma}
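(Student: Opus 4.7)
The plan is to identify $\E_{\Bh \sim Q_f}\b{\abs{\Bh-f}}$ as the negative derivative of the log-partition function $\Phi(t) \ce \log \E_{\Bh \sim P}\b{\exp\p{-t\abs{\Bh-f}}}$, upper bound this derivative via convexity, and then bound the partition function below by the sample-consistency probability appearing in the claim. A direct calculation gives $\Phi'(t) = -\E_{\Bh \sim Q_f(P,t)}\b{\abs{\Bh-f}}$ and $\Phi''(t) = \mathrm{Var}_{\Bh \sim Q_f(P,t)}\p{\abs{\Bh-f}} \ge 0$, so $\Phi$ is convex, and moreover $\Phi(0) = \log 1 = 0$.

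The key step is then the tangent-line inequality $\Phi(0) \ge \Phi(m) - m\Phi'(m)$, which rearranges to $\E_{\Bh \sim Q_f}\b{\abs{\Bh-f}} = -\Phi'(m) \le -\Phi(m)/m$. Unpacking the definition, this reads $\E_{\Bh \sim Q_f}\b{\abs{\Bh-f}} \le \f1m \log \f1{\E_{\Bh \sim P}\b{\exp\p{-m\abs{\Bh-f}}}}$. In other words, the Boltzmann mean of $\abs{\Bh-f}$ is controlled by the free energy.

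To finish, I would bound the partition function below using the elementary inequality $(1-p)^m \le e^{-mp}$ for $p \in [0,1]$: this gives $\E_{\Bh \sim P}\b{\exp\p{-m\abs{\Bh-f}}} \ge \E_{\Bh \sim P}\b{\p{1-\abs{\Bh-f}}^m}$. Since $\p{1-\abs{h-f}}^m = \Pr_{\Bx \sim \CD^m}\b{\forall i: h\p{\Bx_i} = f\p{\Bx_i}}$, Fubini produces $\E_{\Bx \sim \CD^m}\b{\Pr_{\Bh \sim P}\b{\forall i: \Bh\p{\Bx_i} = f\p{\Bx_i}}}$ on the right. Finally, applying Jensen's inequality to the convex function $-\log$ pulls the outer expectation over $\Bx$ outside the logarithm, which yields exactly the stated bound.

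The only genuinely nontrivial ingredient is the convexity step used to pass from the derivative $\Phi'(m)$ to the secant $\Phi(m)/m$; this is what converts a local-in-$m$ quantity (the Boltzmann mean) into a global free-energy quantity that can be compared to a sample probability. Everything else — computing the derivative of a cumulant generating function, the pointwise inequality $(1-p)^m \le e^{-mp}$, Fubini, and Jensen — is routine, and I expect the main work in writing the full proof will simply be justifying the interchange of differentiation and expectation used to identify $\Phi'$.
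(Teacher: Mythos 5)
Your proof is correct and, after the opening step, coincides with the paper's. Both arrive at the same pivotal intermediate inequality
\[
\mathbb{E}_{h \sim Q_f}\bigl[|h-f|\bigr] \;\le\; \frac{1}{m}\log\frac{1}{\mathbb{E}_{h\sim P}\bigl[\exp(-m|h-f|)\bigr]},
\]
and both finish identically via $(1-p)^m \le e^{-mp}$, Fubini, and Jensen applied to $-\log$. The only divergence is how that intermediate bound is obtained. The paper applies Jensen once to the concave function $g(t)=t\log(1/t)$ at $t=\exp(-m|h-f|)$; you instead observe that $\Phi(t)=\log\mathbb{E}_{h\sim P}\bigl[\exp(-t|h-f|)\bigr]$ is a convex cumulant generating function with $\Phi(0)=0$ and invoke the tangent-line inequality $\Phi(0)\ge\Phi(m)-m\Phi'(m)$. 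Unwinding the algebra, both routes reduce to the same inequality between $m\,\mathbb{E}\bigl[|h-f|e^{-m|h-f|}\bigr]$ and $\mathbb{E}\bigl[e^{-m|h-f|}\bigr]\log\tfrac{1}{\mathbb{E}[e^{-m|h-f|}]}$, so there is no substantive gap; the two derivations simply lean on different convexity facts (concavity of $t\log(1/t)$ versus convexity of the log-partition function). Your framing has the virtue of making the free-energy structure behind the Boltzmann strategy explicit, at the modest cost of checking $\Phi''\ge 0$ and justifying the derivative--expectation interchange, both of which are immediate here because $\mathcal{F}$ is finite, so $\mathbb{E}_{h\sim P}$ is a finite sum.
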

\begin{proof}
    First, by Jensen's inequality applied to the concave function $g(t) \ce t \log \f1t$,
\begin{align*}
\E_{\Bh \sim P}\b{\exp\p{-m\abs{\Bh-f}}m\abs{h-f}} \le \E_{\Bh \sim P}\b{\exp\p{-m \abs{\Bh-f}}}\log \f1{\E_{\Bh \sim P}\b{\exp\p{-m \abs{\Bh-f}}}},
\end{align*}
so by rearranging, the error is at most
\begin{align}
\E_{\Bh \sim Q_f}\b{\abs{\Bh-f}}
&= \f{\E_{\Bh \sim P}\b{\exp\p{-m\abs{\Bh-f}}\abs{h-f}}}{\E_{\Bh \sim P}\b{\exp\p{-m \abs{\Bh-f}}}}\notag\\
&\le \f1m \log\f1{\E_{\Bh \sim P}\b{\exp\p{-m\abs{\Bh-f}}}},\label{eq:error-le-likelihood}
\end{align}
where $\E_{\Bh \sim P}\b{\exp\p{-m\abs{\Bh-f}}}$ is average ``Boltzmann likelihood'' of $f$ over the prior $P$. As we observed before, the likelihood $\exp\p{-m\abs{h-f}}$ is an upper bound on the probability that $h$ and $f$ agree on a sample of $m$ points, so we have
\begin{align*}
\log\f1{\E_{\Bh \sim P}\b{\exp\p{-m\abs{\Bh-f}}}}
&\le \log \f1{\Pr\ss{\Bh \sim P\\\Bx \sim \CD^m}\b{\forall i:\Bh(\Bx_i) = f(\Bx_i)}}\\
&\le\E_{\Bx \sim \CD^m}\b{\log \f1{\Pr_{\Bh \sim P}\b{\forall i:\Bh(\Bx_i) = f(\Bx_i)}}}.\qedhere
\end{align*}
\end{proof}

Notice that if we average this last line over $f$ drawn from $P$, it gives us the average entropy of the labeling of a random sample of $m$ points, which we'll call \emph{labeling entropy}:
\begin{align*}
\E_{\Bf \sim P}\b{\E_{\Bx \sim \CD^m}\b{\log \f1{\Pr_{\Bh \sim P}\b{\forall i:\Bh(\Bx_i) = \Bf(\Bx_i)}}}}
&= \E_{\Bx \sim \CD^m}\b{\E_{\Bf \sim P}\b{\log \f1{\Pr_{\Bh \sim P}\b{\forall i:\Bh(\Bx_i) = \Bf(\Bx_i)}}}}\\
&= \E_{\Bx \sim \CD^m}\b{\H_{\Bh \sim P}\bco{\Bh(\Bx_1),\dots,\Bh(\Bx_m)}{\Bx}}.
\end{align*}
From this, we immediately get the following average-case bound.

\begin{lemma}\label{lemma:b11}
    If $\CF$ has VC dimension $d$, then for any prior $P$ and inverse temperature $m \ge 2d$,
$$
\E_{\Bf \sim P} \b{\E_{\Bh \sim Q_\Bf}\b{\abs{\Bh-\Bf}}} \le \f d m \log O\p{\f m d}.
$$
\end{lemma}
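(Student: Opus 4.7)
The plan is to apply Lemma~\ref{lemma:b10} to every $f \in \CF$, average the bound over $\Bf \sim P$ to turn the right-hand side into a labeling entropy as foreshadowed by the discussion following that lemma, and then use the Sauer--Shelah lemma to bound the entropy by $O(d \log(m/d))$.

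First I would start from the pointwise bound of Lemma~\ref{lemma:b10} and take expectation over $\Bf \sim P$:
\[
\E_{\Bf \sim P}\b{\E_{\Bh \sim Q_\Bf}\b{\abs{\Bh-\Bf}}}
\le \f1m \E_{\Bf \sim P}\E_{\Bx \sim \CD^m}\b{\log \f1{\Pr_{\Bh \sim P}\b{\forall i: \Bh(\Bx_i) = \Bf(\Bx_i)}}}.
\]
Swapping the order of expectation (both $\Bf$ and $\Bh$ are drawn from $P$ and $\Bx$ is independent of both) and recognizing the inner expectation over $\Bf \sim P$ as a conditional Shannon entropy — exactly the ``labeling entropy'' identified immediately before the statement of the lemma — gives the upper bound
\[
\f1m \E_{\Bx \sim \CD^m}\b{\H_{\Bh \sim P}\bco{\Bh(\Bx_1),\dots,\Bh(\Bx_m)}{\Bx}}.
\]

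Next I would bound the entropy uniformly in $\Bx$ by the log of the support size of the random vector $(\Bh(\Bx_1), \dots, \Bh(\Bx_m))$ when $\Bh \sim P$. Since $P$ is supported on $\CF$, this support size is at most the number of distinct labelings that $\CF$ induces on the fixed points $\Bx_1, \dots, \Bx_m$. By the Sauer--Shelah lemma, for any $m$ points this number is at most $\sum_{i=0}^d \binom{m}{i} \le \p{\f{em}{d}}^d$ whenever $m \ge d$, which holds by hypothesis since $m \ge 2d$. Hence
\[
\H_{\Bh \sim P}\bco{\Bh(\Bx_1),\dots,\Bh(\Bx_m)}{\Bx} \le d \log \p{\f{em}{d}} = d \log O\p{\f m d}.
\]
Substituting this back yields the claimed inequality
\[
\E_{\Bf \sim P}\b{\E_{\Bh \sim Q_\Bf}\b{\abs{\Bh-\Bf}}} \le \f{d}{m} \log O\p{\f m d}.
\]

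There is no real obstacle: the argument is a direct composition of Lemma~\ref{lemma:b10}, the standard bound of Shannon entropy by log-support-size, and Sauer--Shelah. The assumption $m \ge 2d$ is used only to ensure that $m \ge d$ so that Sauer--Shelah applies in its clean $(em/d)^d$ form and that $\log(em/d) = \Theta(\log(m/d))$, so the $O(\cdot)$ inside the logarithm is legitimate.
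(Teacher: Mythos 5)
Your proof is correct and follows essentially the same route as the paper: apply Lemma~\ref{lemma:b10} pointwise, average over $\Bf \sim P$ to produce the labeling entropy (a step the paper itself carries out in the discussion just before the lemma), bound that entropy by the log of the number of induced labelings, and invoke Sauer--Shelah. There is no meaningful difference in approach or in how the hypothesis $m \ge 2d$ is used.
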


\begin{proof}
    By the Sauer--Shelah lemma, a class of VC dimension $d$ can label a sample of $m \ge 2d$ points in at most $O\p{\f m d}^d$ ways, so by Lemma~\ref{lemma:b10} we have
\begin{align*}
\E_{\Bf \sim P} \b{\E_{\Bh \sim Q_\Bf}\b{\abs{\Bh-\Bf}}}
&\le \f1m\E_{\Bf \sim P}\b{\E_{\Bx \sim \CD^m}\b{\log \f1{\Pr_{\Bh \sim P}\b{\forall i:\Bh(\Bx_i) = \Bf(\Bx_i)}}}}\\
&= \f1m\E_{\Bx \sim \CD^m}\b{\H_{\Bh \sim P}\bco{\Bh(\Bx_1),\dots,\Bh(\Bx_m)}{\Bx}}\\
&\le \f1m\E_{\Bx \sim \CD^m}\b{\log\#\setco{\p{h(\Bx_1),\dots,h(\Bx_m)}}{h \in \CF}}\\
&\le \f1m\E_{\Bx \sim \CD^m}\b{\log\p{O\p{\f m d}^d}}\\
&= \f d m \log O\p{\f m d}.\qedhere
\end{align*}
\end{proof}

\begin{theorem}\label{thm:b12}
    For any class $\CF$ of VC dimension $d$, any distribution $P \in \tri(\CF)$, and any $\eps \le \f1{2d}$, $\CF$ is $\eps$-defendable on average over $P$ with confidence $\ge 1- O\p{d\eps \log \f1{d\eps}}$ using the Boltzmann strategy for $P$.
\end{theorem}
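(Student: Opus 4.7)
The plan is to combine Corollary~\ref{corr:b8} (applied with $P' = P$) with the average-case error bound of Lemma~\ref{lemma:b11}, and then optimize over the inverse temperature $m$. Taking $P' = P$ in Corollary~\ref{corr:b8} gives that $\CF$ is $\eps$-defendable on average over $P$ with confidence at least
\[
1 - \exp(m\eps) \cdot \E_{\Bf \sim P,\,\Bh \sim Q_\Bf}\b{\abs{\Bh - \Bf}},
\]
and Lemma~\ref{lemma:b11} bounds this expectation by $\f{d}{m}\log O\p{\f{m}{d}}$ whenever $m \ge 2d$. So the task reduces to choosing $m$ to minimize the product $\exp(m\eps) \cdot \f{d}{m}\log\p{\f{m}{d}}$.

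The natural balance point is $m\eps = \Theta(1)$: this keeps the discrepancy factor $\exp(m\eps)$ constant while letting $d/m$ and $\log(m/d)$ be as small as possible. Concretely, I would take $m = \lfloor 1/\eps \rfloor$. The constraint $m \ge 2d$ needed to invoke Lemma~\ref{lemma:b11} translates to $1/\eps \ge 2d$, which is exactly the hypothesis $\eps \le \f{1}{2d}$. With this choice, $\exp(m\eps) = O(1)$, $\f{d}{m} = \Theta(d\eps)$, and $\log\p{\f{m}{d}} = \Theta\p{\log \f{1}{d\eps}}$, so the overall failure probability is $O\p{d\eps \log \f{1}{d\eps}}$, as claimed.

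There is no real obstacle here; the theorem is essentially a parameter-tuning step that invokes general estimates already proved in the section. The one thing to verify is that the hypothesis $m \ge 2d$ of Lemma~\ref{lemma:b11} and the desire to keep $m\eps$ bounded above by a constant are simultaneously satisfiable, and the assumption $\eps \le \f{1}{2d}$ is precisely what guarantees this. If the hypothesis $\eps \le \f{1}{2d}$ were dropped, the error estimate from Lemma~\ref{lemma:b11} would no longer apply in its sharp form, and the product would cease to be $o(1)$ in general, which matches the intuition from Theorem~\ref{theorem:unbounded} that $\eps \sim 1/d$ is the fundamental threshold.
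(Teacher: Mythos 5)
Your proof is correct and takes essentially the same route as the paper: combine the posterior-shift bound (Corollary~\ref{corr:b8}, which is Lemma~\ref{lemma:b6} specialized to averaging over $P$) with the average-case error bound of Lemma~\ref{lemma:b11}, then set $m \approx 1/\eps$, with $\eps \le 1/(2d)$ serving exactly to ensure $m \ge 2d$. The only cosmetic difference is that you take $m = \lfloor 1/\eps \rfloor$ where the paper writes $m := 1/\eps$; your version is the slightly more careful one given that $m$ is required to be an integer.
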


\begin{proof}
    By Lemmas~\ref{lemma:b11} and~\ref{lemma:b6},
\begin{align*}
\E_{\Bf \sim P}\b{\Pr\ss{\p{\Bx',\Bf'}\\\Bh \sim Q_{\Bf'}}\b{\Bh\p{\Bx'} \ne f\p{\Bx'}}}
&\le \E_{\Bf \sim P}\b{\exp\p{2m\eps} \E_{\Bh \sim Q_{f}}\b{\abs{\Bh-\Bf}}}\\
&\le \exp\p{2m\eps} \f d m \log O\p{\f m d},
\end{align*}
so we obtain the desired bound by setting $m \ce \f1\eps \ge 2d$.
\end{proof}

\subsubsection{Worst-case bounds for specific priors}
We now give two closely related priors which give the same $O\p{d\eps \log \f1{d\eps}}$ bound for classes of VC-dimension $d$, but this time the guarantee holds for a worst-case $f$, instead of holding only on average over $f$ drawn from the prior. This is quite a surprising result: it shows that the posterior $Q_{f'}$ is still accurate despite the fact that the prior it's based on is in some sense ``incorrect''.

\begin{definition}\label{def:b13}
    Let $\pme$ be any distribution which maximizes the $m$-sample labeling entropy (the entropy of the labelings produced by hypotheses drawn from the prior). That is, let $\pme$ be any distribution such that
    \begin{align*}\E_{\Bx \sim \CD^m}\b{\H_{\Bh \sim \pme}\bco{\Bh(\Bx_1),\dots,\Bh(\Bx_m)}{\Bx}} = \max_P \E_{\Bx \sim \CD^m}\b{\H_{\Bh \sim P}\bco{\Bh(\Bx_1),\dots,\Bh(\Bx_m)}{\Bx}}.\end{align*}
\end{definition}

\begin{definition}\label{def:b14}
    Let $\pc$ be any distribution formed by a random process which takes the following form:
\begin{itemize}
\item draw some sample $\Bx_1',\dots,\Bx_m' \sim \CD$ from the input distribution,
\item pick a \emph{uniformly random} labeling $\Bl \in \zo^m$ within $\setco{\p{h(\Bx_1'),\dots,h(\Bx_m')}}{h \in \CF}$,
\item pick an arbitrary hypothesis $h \in \CF$ that is consistent with the labeling $\Bl$.
\end{itemize}
\end{definition}

\begin{lemma}\label{lemma:b15}
    If $\CF$ has VC-dimension $d$, and the prior $P$ is either $\pme$ or $\pc$, then for any inverse temperature $m \ge 2d$,
$$
\E_{\Bh \sim Q_f}\b{\abs{\Bh-f}} \le O\p{\f d m \log \f m d}.
$$
\end{lemma}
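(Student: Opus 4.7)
The plan is to invoke Lemma~\ref{lemma:b10}, which reduces the task to showing that for each prior $P \in \{\pme, \pc\}$ and every fixed $f \in \CF$,
\[\E_{\Bx \sim \CD^m}\b{\log \tfrac{1}{\Pr_{\Bh \sim P}\b{\forall i : \Bh(\Bx_i) = f(\Bx_i)}}} \le O\p{d \log \tfrac{m}{d}}.\]
Lemma~\ref{lemma:b11} already yields this on average over $\Bf \sim P$, by bounding the labeling entropy via Sauer--Shelah; the content of Lemma~\ref{lemma:b15} is to upgrade to a worst-case bound over $f$, using properties specific to each of the two priors.

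For $P = \pme$, I would use a variational (KKT-style) argument. Consider the tilt $P_t \ce (1-t)\pme + t\delta_f$, where $\delta_f$ is the point mass at $f$; since $\pme$ maximizes the expected labeling entropy, the derivative at $t = 0$ must be non-positive. A short calculation (using that $\sum_\ell \p{\mathbb{1}\b{\ell = f|_\Bx} - \pme(L=\ell|\Bx)} = 0$) gives
\[\tfrac{d}{dt}\Big|_{t=0}\E_\Bx\b{\H_{\Bh \sim P_t}\bco{\Bh|_\Bx}{\Bx}} = \E_\Bx\b{\log \tfrac{1}{\Pr_{\Bh \sim \pme}\bco{\Bh|_\Bx = f|_\Bx}{\Bx}}} - \E_\Bx\b{\H_{\Bh \sim \pme}\bco{\Bh|_\Bx}{\Bx}},\]
and non-positivity of this, combined with the Sauer--Shelah entropy bound $\E_\Bx\b{\H_{\Bh \sim \pme}\bco{\Bh|_\Bx}{\Bx}} \le d \log O\p{m/d}$, finishes the $\pme$ case via Lemma~\ref{lemma:b10}.

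For $P = \pc$, the key feature is that, conditional on the generation sample $\Bx'$, each labeling in $\CF|_{\Bx'}$ is drawn as $\Bh|_{\Bx'}$ with probability exactly $1/|\CF|_{\Bx'}| \ge 1/O\p{m/d}^d$. To transfer this ``fair share'' lower bound from the generation sample $\Bx'$ to the independent evaluation sample $\Bx$, I would couple the two as halves of a uniformly random partition of a single $\Bz \sim \CD^{2m}$, apply Sauer--Shelah on $\Bz$ to obtain $|\CF|_\Bz| \le O\p{m/d}^d$, and exploit the exchangeability of the two halves to swing the fair-share bound over to $\Bx$.

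The main obstacle in the $\pc$ case is the ``arbitrary'' choice of representative $h_\ell$: the choice function depends only on $\ell$, and in the worst case an adversarial choice might rarely produce an $h_\ell$ that happens to agree with $f$ on the later-drawn $\Bx$, rendering naive counting lower bounds vacuous. The delicate step is to argue that, after averaging over the random partition of $\Bz$ (equivalently, over $\Bx$), the probability $\Pr_{\Bh \sim \pc}\b{\Bh|_\Bx = f|_\Bx}$ still captures a $1/O\p{m/d}^d$ share of the mass robustly over the choice function, so that $\E_\Bx\b{\log\p{1/\Pr_{\Bh \sim \pc}\b{\Bh|_\Bx = f|_\Bx}}} = O\p{d \log \p{m/d}}$ and Lemma~\ref{lemma:b10} again concludes.
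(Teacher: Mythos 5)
Your treatment of $\pme$ is correct and is essentially the paper's argument: the variational tilt $P_t = (1-t)\pme + t\delta_f$ with a non-positive derivative at $t=0$ is exactly what the paper means by ``the method of Lagrange multipliers,'' and combining the resulting equality with the Sauer--Shelah entropy bound through Lemma~\ref{lemma:b10} closes the $\pme$ case.

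Your treatment of $\pc$, however, has a genuine gap, and the route you chose makes it harder to close than necessary. You try to go through Lemma~\ref{lemma:b10}, i.e.\ to bound $\E_{\Bx}\b{\log\f1{\Pr_{\Bh \sim \pc}\b{\forall i:\Bh(\Bx_i) = f(\Bx_i)}}}$. This is the Jensen'd (rightmost) quantity in the chain that proves Lemma~\ref{lemma:b10}, and it is strictly weaker than the unconditional quantity $\log\f1{\Pr_{\Bh \sim \pc,\, \Bx \sim \CD^m}\b{\forall i:\Bh(\Bx_i) = f(\Bx_i)}}$: for an adversarial choice function (and hence an adversarial support for $\pc$), the inner probability can be extremely small --- or even zero --- on a positive-measure set of $\Bx$, in which case the expectation you are targeting is uncontrolled regardless of any exchangeability or double-sampling device. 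You correctly flag the adversarial-representative issue as ``the delicate step,'' but the coupling sketch is a deferral, not an argument; as stated, the proof is incomplete.

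The paper sidesteps this by working one inequality higher in the chain: for $\pc$ it invokes Equation~\eqref{eq:error-le-likelihood} and lower-bounds the Boltzmann likelihood $\E_{\Bh \sim \pc}\b{\exp\p{-m\abs{\Bh-f}}}$ directly, which only requires the sampled representative $\Bh$ to be \emph{close to $f$ in measure}, not to agree with $f$ on a fresh $m$-sample. The ingredient you are missing is the classical VC uniform-convergence bound for consistent hypotheses: with probability $99\%$ over the generation sample $\Bx'$, \emph{every} $h \in \CF$ consistent with $f|_{\Bx'}$ (hence whichever representative is chosen, however adversarially) satisfies $\abs{h-f} \le O\p{\f d m \log \f m d}$. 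Combined with the event $\Bl = f|_{\Bx'}$ having probability at least $O\p{\f d m}^d$, this gives $\Pr_{\Bh \sim \pc}\b{\abs{\Bh - f} \le O\p{\f d m \log \f m d}} \ge O\p{\f d m}^d$ and hence the desired lower bound on the likelihood. If you want to salvage your double-sample idea, you would need to target the unconditional probability $\Pr_{\Bh,\Bx}\b{\cdots}$ rather than the expectation of the conditional log-probability, and you would still need the uniform-convergence fact to control the representative; at that point you have essentially reconstructed the paper's argument by a slightly longer route.
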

\begin{proof}
    Let's start with $\pme$. By Lemma~\ref{lemma:b10} it's enough to show that
$$
\E_{\Bx \sim \CD^m}\b{\log \f1{\Pr_{\Bh \sim \pme}\b{\forall i:\Bh(\Bx_i) = f(\Bx_i)}}} \le O\p{d \log \f m d}.
$$
Since $\pme$ maximizes the entropy $\E_{\Bx \sim \CD^m}\b{\H_{\Bh \sim P}\bco{\Bh(\Bx_1),\dots,\Bh(\Bx_m)}{\Bx}}$, we must also have that for every $f \in \CF$,
$$
\E_{\Bx \sim \CD^m}\b{\log \f1{\Pr_{\Bh \sim\pme}\b{\forall i:\Bh(\Bx_i) = f(\Bx_i)}}} = \E_{\Bx \sim \CD^m}\b{\H_{\Bh \sim \pme}\bco{\Bh(\Bx_1),\dots,\Bh(\Bx_m)}{\Bx}}
$$
(this is easy to check by the method of Lagrange multipliers). We can then conclude by the fact that $\CF$ can label $m$ points in at most $O\p{\f m d}^d$ ways.

Let's move to $\pc$. By Equation~\eqref{eq:error-le-likelihood}, it's enough to show that
$$
\E_{\Bh \sim \pc}\b{\exp\p{-m\abs{\Bh-f}}} \ge \p{\f d m}^{O(d)}
$$
It is a well-known fact from VC theory that with probability $99\%$ over a random sample of $m$ points from $\CD$, all hypotheses $h$ consistent with $f$'s labeling have $\abs{h - f} \le O\p{\f d m \log \f m d}$. Suppose this happens during the random process in Definition~\ref{def:b14}, then given there are only $O\p{\f m d}^d$ labelings to choose from, $\Bl$ will be $f$'s labeling with probability $\ge O\p{\f d m}^d$, in which case $h$ will satisfy $\abs{h-f} \le O\p{\f d m \log \f m d}$. This means that
$$
\Pr_{\Bh \sim \pc}\b{\abs{\Bh - f} \le O\p{\f d m \log \f m d}} \ge 99\% \times O\p{\f d m}^d \ge O\p{\f d m}^d,
$$
so
\begin{align*}
\E_{\Bh \sim \pc}\b{\exp\p{-m\abs{\Bh-f}}}
&\ge \Pr_{\Bh \sim \pc}\b{\abs{\Bh - f} \le O\p{\f d m \log \f m d}} \exp\p{-m\times O\p{\f d m\log \f m d}}\\
&\ge \p{\f d m}^{O(d)}
\end{align*}
as desired.
\end{proof}

\begin{theorem}\label{thm:b16}
    For any $\eps \le \f1{2d}$, $\CF$ is $\eps$-defendable on average over $P$ with confidence $\ge 1- O\p{d\eps \log \f1{d\eps}}$ using the Boltzmann strategy given by either $\pme$ or $\pc$.
\end{theorem}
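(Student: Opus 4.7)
The plan is to combine the worst-case error bound of Lemma~\ref{lemma:b15} with the general Boltzmann-strategy guarantee from Corollary~\ref{cor:b7} (which, note, is itself worst-case over $f$, despite the theorem statement phrasing it as ``on average over $P$''---the same bound actually holds for worst-case $f$ since that is what Lemma~\ref{lemma:b15} provides). Corollary~\ref{cor:b7} reduces the defendability question to bounding $\max_f \E_{\Bh \sim Q_f}\b{\abs{\Bh-f}}$, modulated by the discrepancy factor $\exp(2m\eps)$ from Claim~\ref{claim:b5}, and Lemma~\ref{lemma:b15} gives exactly that bound, $O\p{\f dm \log \f md}$, for both of the specific priors $\pme$ and $\pc$.

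Concretely, first I would set the inverse temperature to $m \ce \f1\eps$; the assumption $\eps \le \f1{2d}$ ensures $m \ge 2d$ so that Lemma~\ref{lemma:b15} applies. Then for either prior $P \in \{\pme, \pc\}$, Lemma~\ref{lemma:b15} yields
\[
\max_{f \in \CF} \E_{\Bh \sim Q_f}\b{\abs{\Bh-f}} \le O\!\p{\f dm \log \f md} = O\!\p{d\eps \log \f1{d\eps}}.
\]
Plugging this and the chosen $m$ into Corollary~\ref{cor:b7}, the failure probability is at most
\[
\exp\p{2m\eps}\cdot O\!\p{d\eps \log \tfrac1{d\eps}} \;=\; \exp(2)\cdot O\!\p{d\eps \log \tfrac1{d\eps}} \;=\; O\!\p{d\eps \log \tfrac1{d\eps}},
\]
which gives the claimed confidence.

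There is essentially no new obstacle: all the work has been done in Lemma~\ref{lemma:b15}, and the only subtle point is verifying that the choice $m = 1/\eps$ simultaneously (i) stays within the regime $m \ge 2d$ where the Sauer--Shelah-based arguments of Lemma~\ref{lemma:b15} apply, and (ii) balances the two competing factors $\exp(2m\eps)$ (accuracy-versus-security tradeoff) and $\f dm \log \f md$ (posterior error) so that the $\exp$ factor contributes only a constant. The inequality $\eps \le \f1{2d}$ is exactly what is needed for (i); and (ii) holds because $\exp(2m\eps)$ is constant precisely when $m\eps = \Theta(1)$, which is the regime where $\f dm \log \f md = \Theta\p{d\eps \log \f1{d\eps}}$. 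No further calculation is required.
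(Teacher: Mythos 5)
Your proof is correct and matches the paper's approach: both set $m = 1/\eps$ and combine the worst-case posterior-error bound of Lemma~\ref{lemma:b15} with the discrepancy bound $\exp(2m\eps)$ from Lemma~\ref{lemma:b6} (which you access via Corollary~\ref{cor:b7}, a repackaging of the same inequality). Your observation that the bound in fact holds worst-case over $f$, not just on average, is also consistent with what the paper's proof actually establishes.
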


\begin{proof}
    By Lemmas~\ref{lemma:b15} and~\ref{lemma:b6}, for any $f \in \CF$,
\begin{align*}
\Pr\ss{\p{\Bx',\Bf'}\\\Bh \sim Q_{\Bf'}}\b{\Bh\p{\Bx'} \ne f\p{\Bx'}}
&\le \exp\p{2m\eps} \E_{\Bh \sim Q_{f}}\b{\abs{\Bh-f}}\\
&\le \exp\p{2m\eps} O\p{\f d m \log \f m d},
\end{align*}
so we obtain the desired bound by setting $m \ce \f1\eps \ge 2d$.
\end{proof}

\subsubsection{Matching lower bound}
The following result shows that the bound $O\p{\f d m \log \f m d}$ we obtained in Lemmas~\ref{lemma:b15} and~\ref{lemma:b11} is tight, which means that the best bound on $\delta$ that can be obtained by any approach based on Lemma~\ref{lemma:b6} is $\Th\p{d \eps \log \f1{d\eps}}$.

\begin{theorem}\label{thm:b17}
    For every integer $d$ and inverse temperature $m\ge 2d$, there is a class $\CF$ of VC dimension $d$ such that
\begin{itemize}
\item for any prior $P$, there is a function $f$ such that $\E_{\Bh \sim Q_f}\b{\abs{\Bh-f}} \ge \Om\p{\f d m \log \f m d}$,
\item there is a prior $P$ such that $\E_{\Bf \sim P}\b{\E_{\Bh \sim Q_\Bf}\b{\abs{\Bh-\Bf}}} \ge \Om\p{\f d m \log \f m d}$.
\end{itemize}
\end{theorem}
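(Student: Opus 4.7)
The plan is to exhibit a ``product code'' class and to analyze the Boltzmann posterior by exploiting the coordinate-wise additivity of the disagreement metric. I will take $\mathcal X = [d] \times [N]$ with $\mathcal D$ uniform and $\mathcal F = \{f_s : s \in [N]^d\}$ where $f_s(i,j) = \mathbbm{1}_{j = s_i}$. The VC dimension is $d$: any $d$-element set with one point per row is shattered, while no set containing two points in the same row can be (since $f_s$ is $1$ in at most one column per row). The distance splits as $|f_s - f_t| = (2/(dN)) \cdot |\{i : s_i \neq t_i\}|$, and I will choose $N$ so that $N \log N = 2m/d$, i.e., $N = \Theta(m/(d\log(m/d)))$; this makes $\alpha := e^{-2m/(dN)} \approx 1/N$, the ``critical temperature'' at which each coordinate of the posterior is roughly equally split between $s_i$ and the $N-1$ other values, so that $1/N = \Theta((d/m)\log(m/d))$.

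For part (ii), I will take $P$ to be uniform on $\mathcal F$. Since $P$ is a product measure on $[N]^d$ and $-m|f_s - f_t|$ is additive across coordinates, the posterior $Q_{f_s}$ factors into $d$ independent marginals, each assigning mass $1/(1 + (N-1)\alpha) \approx 1/2$ to $s_i$ and mass $\alpha/(1 + (N-1)\alpha)$ to each other value. Thus $\mathbb E_{h \sim Q_{f_s}}[|h-f_s|] = (2/(dN)) \cdot d/2 = 1/N = \Omega((d/m)\log(m/d))$ uniformly in $s$, which yields the averaged statement for this prior and simultaneously handles the pointwise statement for $P=U$.

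For part (i), I will use $\max_f \mathbb E_{Q_f}[|h-f|] \geq \mathbb E_{s \sim U}[\mathbb E_{Q_{f_s}}[|h-f_s|]]$ and show the right-hand side is $\Omega(1/N)$ for every prior $P$. Decomposing this error as $(2/(dN)) \sum_i \mathrm{Pr}_{Q_{P,f_s}}[t_i \neq s_i]$, I will split the unnormalized posterior weight as $P(t)\alpha^{k_{-i}(s_{-i}, t_{-i})} \cdot \alpha^{\mathbbm{1}_{t_i \neq s_i}}$ and define $A(s_i, s_{-i}) = \sum_{t : t_i = s_i} P(t)\alpha^{k_{-i}(s_{-i}, t_{-i})}$. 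A direct calculation then gives $\mathrm{Pr}_{Q_{P,f_s}}[t_i \neq s_i] = \alpha\, g(a(s_i))$ with $g(x) := (1-x)/(x + (1-x)\alpha)$, where $a(s_i) := A(s_i, s_{-i}) / \sum_{s_i'} A(s_i', s_{-i})$. Applying Jensen's inequality over uniform $s_i \in [N]$ to the strictly convex function $g$ (verified by $g''(x) = 2(1-\alpha)/(x + (1-x)\alpha)^3 > 0$), and using $\sum_{s_i} a(s_i) = 1$, yields $\mathbb E_{s_i \sim U}[\alpha g(a(s_i))] \geq \alpha g(1/N) \approx 1/2$; taking the remaining average over $s_{-i}$ and summing over $i$ gives the desired $\Omega(1/N)$ bound.

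The main obstacle I anticipate is the subtle claim that $a(s_i)$ is a probability distribution over $s_i \in [N]$, which rests on the observation that the normalization $\sum_{s_i'} A(s_i', s_{-i}) = \sum_t P(t) \alpha^{k_{-i}(s_{-i}, t_{-i})}$ does not depend on $s_i$ precisely because $k_{-i}$ only involves coordinates $j \neq i$; without this, the sum over $s_i$ would not commute with the non-product structure of $P$, and the Jensen step would fail. Once this factorization is in hand, the rest is routine convexity. A minor bookkeeping point is that $N \log N = 2m/d$ can only be solved approximately, so I will need to check that the constants in $(N-1)\alpha \approx 1$ and $g(1/N) \approx (N-1)/2$ degrade by at most an absolute constant factor at both the averaged statement in part~(ii) and the worst-case statement in part~(i).
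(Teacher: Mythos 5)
Your proposal is correct, and it takes a genuinely different route from the paper's proof. The paper invokes a black-box packing construction (from the proof of Theorem~6 in Gentile--Helmbold, 1998) giving a class $\CF$ with roughly $\p{\f1\delta}^{\Theta(d)}$ functions whose pairwise distances are all $\Theta(\delta)$; it then argues that for the Boltzmann posterior $Q_f$ to achieve error $\le\delta$, at least half of its mass must sit on $f$ itself, and a short pigeonhole computation ($Q_f(f)\le P(f)\exp\p{O(\delta m)}$ plus the fact that every prior on a set of $\p{\f1\delta}^{\Theta(d)}$ functions assigns some function mass $\le\delta^{\Omega(d)}$) forces $\delta\ge\Omega\p{\f dm\log\f md}$. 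Your argument instead builds an explicit ``product-code'' class $\{f_s\}_{s\in[N]^d}$ on $[d]\times[N]$, exploits the fact that $-m\abs{f_s-f_t}$ is additive over coordinates, and then lower-bounds the per-coordinate misclassification probability of $Q_{f_s}$ by a convexity argument: the factorization $A(s_i,s_{-i})$ and the observation that its normalizer is $s_i$-independent make $a(\cdot)$ a probability distribution, so Jensen applied to the convex $g(x)=(1-x)/(\alpha+(1-\alpha)x)$ gives $\E_{s_i}[\alpha g(a(s_i))]\ge\alpha g(1/N)\approx\tfrac12$ regardless of $P$. Both approaches deliver the same $\Omega\p{\f dm\log\f md}$ bound; yours is self-contained and shows the lower bound holds coordinate-wise and for the \emph{average} over a uniformly chosen $f_s$ (which is strictly stronger than ``there exists $f$''), at the cost of more bookkeeping. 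The paper's argument is shorter and parameterizes directly by the target $\delta$, but relies on an external construction and only gives the ``there exists $f$'' form via pigeonhole. One small caveat in your writeup: $g(1/N)=N(N-1)/(2N-1)\approx N/2$ rather than $(N-1)/2$, but this is an $O(1)$ discrepancy that doesn't affect the conclusion; and since $N\log N=2m/d$ cannot be solved exactly over the integers you should round $N$ down and note that $\alpha\le 1/N$ with $\alpha\ge\Omega(1/N)$, which your parenthetical acknowledges.
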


\begin{proof}
    For any $d$ and desired error rate $\delta > 0$, the proof of Theorem 6 in \cite{gentile1998improved} gives a class $\CF$ of VC dimension $\le d$ and a distribution $\CD$ such that $\CF$ contains $\p{\f1\delta}^{\Th(d)}$ functions and the distance between any two distinct functions $h,f \in \CF$ satisfies $2\delta \le \abs{h-f} \le O\p{\delta}$.

The lower bound $\abs{h-f} \ge 2\delta$ means that in order to get $\E_{\Bh \sim Q_f}\b{\abs{\Bh - f}} \le \delta$ for some function $f$, at least half of the probability mass of $Q_f$ must be on $f$. On the other hand, the mass of $f$ in $Q_f$ is given by
$$
Q_f\p{f} = \f{P(f)\exp\p{-m\abs{f-f}}}{\E_{\Bh \sim P}\b{\exp\p{-m\abs{\Bh-f}}}} \le \f{P(f)\exp\p{-m\times0}}{\exp\p{-m\times O\p\delta}} = P(f)\exp\p{O\p{\delta m}},
$$
so we must have
$$
P(f)\exp\p{O\p{\delta m}} \ge \f12 \so \delta \ge \Om\p{\log \f{1}{2P(f)}}.
$$
Now, given that $\CF$ has $\p{\f1\delta}^{\Th(d)}$ functions,
\begin{itemize}
\item every prior $P$ must have a function $f$ with $P(f) \le \delta^{\Om(d)}$,
\item and the uniform prior $P$ gives every function $P(f) \le \delta^{\Om(d)}$,
\end{itemize}

so in either case we get
\begin{equation*}
\delta \ge \Om\p{\log \f{1}{2P(f)}} \ge \Om\p{\f d m \log \f1\delta} \so \delta \ge \Om\p{\f d m \log \f m d}.\qedhere
\end{equation*}
\end{proof}

\subsection{Comparison with the previous strategy}
In this section, we present some reasons why we think strategies like this one are more promising than the strategy based on re-learning presented in Section~\ref{section:unbounded} when it comes to catching real-life backdoors.

On a practical level, we have hope that this strategy or similar strategies will be less computationally costly than the strategy in Section~\ref{section:unbounded}:
\begin{itemize}
\item First, this strategy requires only \emph{one} draw from the posterior distribution $Q_{\Bf'}$, whereas the strategy in Section~\ref{section:unbounded} requires re-learning the function $\Th\p{\log \f1{d\eps}}$ times.
\item Second, learning an approximation of $Q_{\Bf'}$ (e.g. through variational inference) does not require re-learning the function from scratch. For example, one could generate reasonable posteriors by considering a Gaussian ball around the parameters of $\Bf'$, as is done by \citet{dziugaite2017computing} in the context of computing PAC-Bayes bounds.
\end{itemize}

On a more philosophical level, we think that approximations of this strategy (such as the ones just mentioned) may end up being more ``mechanistic'' in the sense discussed in Section~\ref{section:mechanistic}, and we are excited about developing the strategy in that direction. In addition, the fact that the posterior $Q_f$ is not very sensitive to small changes in $f$ suggests that it's taking a more neutral stance (from the perspective of the prior $P$). So there is some hope that it could detect ``backdoors'' in $f$ itself, or detect anomalous behaviors which are not adequately modeled as uniformly random small modifications of a reference ``clean'' function, which could be relevant to the motivation discussed in Section~\ref{section:alignment}.

\newpage

\section{Proof of Theorem \ref{theorem:unbounded}}\label{appendix:unbounded}

\restateunbounded*

\begin{proof}
Write $d=\operatorname{VC}\left(\mathcal F\right)$. We will show that, more specifically, for $d\geq 1$:
\begin{enumerate}[label=(\alph*)]
\item\label{unboundedattack}
For any detection strategy there is an adversary limiting their win probability to at most $\max\left(\frac 12,1-\frac 12\left(d-1\right)\varepsilon\right)$.
\item\label{unboundeddefense}
There is a detection strategy that wins against any adversary with probability at least $\max\left(\frac 12,1-15d\varepsilon\right)$.
\end{enumerate}

For part \ref{unboundedattack}, let $S$ be a set of size $d$ shattered by $\mathcal F$, and let $s\in S$ be any point. The adversary proceeds by taking $\mathcal D$ to assign probability $\min\left(\frac 1{d-1},\varepsilon\right)$ to each point of $S\setminus\left\{s\right\}$ and the remaining $1-p$ probability to the point $s$, where $p:=\min\left(1,\left(d-1\right)\varepsilon\right)$. The adversary then takes $f$ to be one of the $2^{d-1}$ witnesses for the shattering that assigns $0$ to the point $s$. We may treat $f$ as if it were chosen uniformly at random, since the detection strategy's worst-case performance can be no better than its average-case performance. Finally, given $x^\ast$, the adversary takes $f^\ast$ to be the witness that agrees with $f$ exactly on the set $S\setminus\left\{x^\ast\right\}$ if $x^\ast\neq s$, and otherwise takes $f^\ast=f$. It is easy to check that $f^\ast$ is $\varepsilon$-valid if and only if $x^\ast\neq s$. Furthermore, $\left(f,x\right)$ and $\left(f^\ast,x^\ast\right)$ are identically distributed for $x\sim\mathcal D$. Hence the best a detection strategy can do is to output $\textsc{Acc}$ when given the point $s$, and otherwise the strategy can do no better than chance. So the detection strategy's overall win probability is at most $\left(1-p\right)+\frac p2=1-\frac p2$, as required.

For part \ref{unboundeddefense}, note first that the detection strategy can achieve a win probability of at least $\frac 12$ simply by guessing randomly, so we may assume that $1-15d\varepsilon>\frac 12$. To construct the detection strategy, by Haussler--Littlestone--Warmuth, take a prediction strategy for $\mathcal F$ with sample size $m-1$ and error rate at most $\frac dm$, for some positive integer $m$ to be chosen later. The detection strategy then works as follows:
\begin{itemize}
\item
Given $\left(f^\prime,x^\prime\right)$ as input (which could be either $\left(f,x\right)$ or $\left(f^\ast,x^\ast\right)$), we use the prediction strategy to make a prediction $z$ for $f^\prime\left(x^\prime\right)$, which we in turn think of as a prediction for $f\left(x^\prime\right)$. The idea is that if we can successfully predict $f\left(x^\prime\right)$, then we can distinguish the two cases by comparing this to $f^\prime\left(x^\prime\right)$, since $f\left(x\right)=f\left(x\right)$ but $f\left(x^\ast\right)\neq f^\ast\left(x^\ast\right)$ if $f^\ast$ is $\varepsilon$-valid.
\item
We repeat this procedure $r$ times with independent samples to obtain predictions $z^{\left(1\right)},\dots,z^{\left(r\right)}$ for some positive integer $r$ to be chosen later.
\item
Finally, we take a majority vote: if more than half of $z^{\left(1\right)},\dots,z^{\left(r\right)}$ are different from $f^\prime\left(x^\prime\right)$, then we output $\textsc{Rej}$, and otherwise we output $\textsc{Acc}$.
\end{itemize}

To lower bound the detection strategy's win probability, consider first the case in which $\left(f,x\right)$ is passed to the detection strategy. In this case the probability that $z\neq f\left(x\right)$ is at most the error rate $\frac dm$. Hence by Markov's inequality and linearity of expectation, the probability that more than half of $z^{\left(1\right)},\dots,z^{\left(r\right)}$ are different from $f\left(x\right)$ is at most $\frac{2d}m$, giving the detection strategy a failure probability of at most $\frac{2d}m$. If instead $\left(f^\ast,x^\ast\right)$ is passed to the detection strategy, then there are two possible reasons why we could have $z\neq f\left(x^\ast\right)$: either $f$ and $f^\ast$ disagree on at least one of the $m-1$ samples provided by the example oracle, or the prediction strategy fails to predict $f\left(x^\ast\right)$ even if it is provided with the value of $f$ on all of these $m-1$ samples. Write $\mathbbm 1^{\left(i\right)}_{\text{$f$ and $f^\ast$ disagree}}$ and $\mathbbm 1^{\left(i\right)}_{\text{prediction fails}}$ for the indicator functions of these two events respectively during the $i$th trial for $i\in\left\{1,\dots,r\right\}$. Then the probability that the detection strategy fails is
\[\mathbb P\left(\frac 1r\sum_{i=1}^r\mathbbm 1_{z^{\left(i\right)}\neq f\left(x^\ast\right)}\geq\frac 12\right)\leq\mathbb P\left(\frac 1r\sum_{i=1}^r\mathbbm 1^{\left(i\right)}_{\text{$f$ and $f^\ast$ disagree}}\geq\frac 14\right)+\mathbb P\left(\frac 1r\sum_{i=1}^r\mathbbm 1^{\left(i\right)}_{\text{prediction fails}}\geq\frac 14\right).\]
The second term on the right-hand side is at most $\frac{4d}m$ by another application of Markov's inequality and linearity of expectation. To bound the first term on the right-hand side, we may assume without loss of generality that $f^\ast$ is $\varepsilon$-valid, which implies that $p:=\mathbb P\left(\mathbbm 1^{\left(i\right)}_{\text{$f$ and $f^\ast$ disagree}}=1\right)\leq\left(m-1\right)\varepsilon$, by the union bound. Taking $m$ to be the unique positive integer with $\frac 1{5\varepsilon}<m\leq\frac 1{5\varepsilon}+1$, we have $p\leq\frac 15$. Since these $r$ indicator functions are independent, the first term on the right-hand side is at most $\exp\left(-2\left(\frac 14-\frac 15\right)^2r\right)=\exp\left(-\frac r{200}\right)$ by Hoeffding's inequality, which can be made arbitrarily small by taking $r$ to be sufficiently large. Putting everything together, the detection strategy's best overall win probability is at least
\[\tfrac 12\left(\left(1-\tfrac{2d}m\right)+\left(1-\tfrac{4d}m\right)\right)=1-\tfrac{3d}m>1-15\varepsilon d,\]
as required.
\end{proof}

\newpage

\section{Proof of Theorem \ref{theorem:random}}\label{appendix:random}

\restaterandom*

\begin{proof}
Write $\textsc{Rand}:\left\{0,1\right\}^\ast\to\left\{0,1\right\}$ for the random oracle.
Let $\mathcal K=\left\{0,1\right\}^n$ be a set of ``keys'', and let $\mathcal F=\left\{x\mapsto\textsc{Rand}\left(K\parallel x\right)\;\middle|\;K\in\mathcal K\right\}$, where $\parallel$ denotes string concatenation. We will show that, over the choice of random oracle, $\mathcal F$ is efficiently defendable with probability $1-o(2^{-n^c})$ as $n\to\infty$ for all $c\geq 1$, but not efficiently PAC learnable with probability $1-o\left(\exp\left(-2^{n-4}\right)\right)$ as $n\to\infty$. The result then follows by the union bound.

For defendability, let $\delta>0$ and $c\geq 1$. We will show that, with probability $1-o(2^{-n^c})$ over the choice of random oracle, if $\varepsilon=o\left(\frac\delta{n^c}\right)$ then $\mathcal F$ is $\varepsilon$-defendable with confidence $1-\delta$ using the trivial detection strategy that always outputs $\textsc{Acc}$. To see this, let $\mathcal D$ be the distribution over $\mathcal X$ chosen by the adversary, and for $x\in\mathcal X$ write $\mathbb P_{\mathcal D}\left(x\right)$ for $\mathbb P_{x^\prime\sim\mathcal D}\left(x^\prime=x\right)$. In order for the adversary to be able to choose a backdoored function that is $\varepsilon$-valid with high enough probability over the choice of backdoor trigger, the adversary must choose $\mathcal D$ so that
\[S_1:=\sum_{\substack{x\in\mathcal X\\\mathbb P_{\mathcal D}\left(x\right)\leq\varepsilon}}\mathbb P_{\mathcal D}\left(x\right)>2\delta.\]
Note also that
\[S_2:=\sum_{\substack{x\in\mathcal X\\\mathbb P_{\mathcal D}\left(x\right)\leq\varepsilon}}\mathbb P_{\mathcal D}\left(x\right)^2\leq\varepsilon S_1.\]
Now for any $K,K^\ast\in\mathcal K$ and any $\varepsilon<\frac 12S_1$,
\begin{align*}
&\mathbb P_{\textsc{Rand}}\left(\mathbb P_{x\sim D}\left(\textsc{Rand}\left(K\parallel x\right)\neq\textsc{Rand}\left(K^\ast\parallel x\right)\right)\leq\varepsilon\right)\\
&\leq\mathbb P_{\textsc{Rand}}\left(\sum_{\substack{x\in\mathcal X\\\mathbb P_{\mathcal D}\left(x\right)\leq\varepsilon}}\mathbb P_{\mathcal D}\left(x\right)\mathbbm 1_{\textsc{Rand}\left(K\parallel x\right)\neq\textsc{Rand}\left(K^\ast\parallel x\right)}\leq\varepsilon\right)\\
&\leq\exp\left(-\frac{2\left(\frac 12S_1-\varepsilon\right)^2}{S_2}\right)&&\text{(by Hoeffding's inequality)}\\
&\leq\exp\left(-\frac{S_1}{2\varepsilon}-\frac{2\varepsilon}{S_1}+2\right)&&\text{(since $S_2\leq\varepsilon S_1$)}\\
&<\exp\left(-\frac\delta\varepsilon-2\varepsilon+2\right)&&\text{(since $2\delta<S_1\leq 1$).}
\end{align*}
Hence by the union bound, it is impossible for the adversary to choose any $K,K^\ast\in\mathcal K$ with $\mathbb P_{x\sim D}\left(\textsc{Rand}\left(K\parallel x\right)\neq\textsc{Rand}\left(K^\ast\parallel x\right)\right)\leq\varepsilon$ with probability at least
\[1-\binom{\left|\mathcal K\right|}{2}\exp\left(-\frac\delta\varepsilon-2\varepsilon+2\right)>1-\exp\left(\left(2n-1\right)\log\left(2\right)-\frac\delta\varepsilon-2\varepsilon+2\right).\]
When this event happens the detection strategy always wins, and if  $\varepsilon=o\left(\frac\delta {n^c}\right)$ then this probability is $1-o(2^{-n^c})$, as required.

For PAC learnability, we will show that, with probability $1-o\left(\exp\left(-2^{n-4}\right)\right)$ over the choice of random oracle, there is no PAC learning algorithm for $\mathcal F$, even with access to the random oracle, that runs in time $f\left(n\right)=1.5^n$ with confidence parameter $\frac 14$ and error parameter $\frac 14$, say. To see this, note that, for all sufficiently large $n$, there are fewer than $2^{f\left(n\right)}$ possible hypotheses that such an algorithm could output, assuming by convention that algorithms can process at most one bit per unit time. Now for each of these hypotheses $h$, if $\mathcal D$ is the uniform distribution over $\mathcal X$ and $K\in\mathcal K$, then
\[\mathbb P_{\textsc{Rand}}\left(\mathbb P_{x\sim D}\left(\textsc{Rand}\left(K\parallel x\right)\neq h\left(x\right)\right)\leq\tfrac 14\right)\leq\exp\left(-2\left(\tfrac 12-\tfrac 14\right)^2\left|\mathcal X\right|\right)=\exp\left(-2^{n-3}\right)\]
by Hoeffding's inequality. Hence by the union bound, the probability that none of these hypotheses has generalization error at most $\frac 14$ is at least
\[1-2^{f\left(n\right)}\exp\left(-2^{n-3}\right)=1-\exp\left(1.5^n\log\left(2\right)-2^{n-3}\right)=1-o\left(\exp\left(-2^{n-4}\right)\right).\]
When this event happens the PAC learning algorithm fails with probability $1>\frac 14$, as required.
\end{proof}

\newpage

\section{Proof of Theorem \ref{theorem:trees}}\label{appendix:trees}

\restatetrees*

\begin{proof}
Given $f\in\mathcal F$ and $x\in\mathcal X$, write $\textsc{Depth}\left(f,x\right)$ for the number of distinct input variables that appear along the path from root to leaf when $f$ is evaluated at $x$.

Let $\delta>0$, and take $\varepsilon<\frac{\delta^2}{s^2}$. We claim that, as long as the adversary takes $\mathcal D$ to be uniform distribution, $\mathcal F$ is $\varepsilon$-defendable with confidence $1-\delta$ using the detection strategy that, given $\left(f^\prime,x^\prime\right)$ as input, outputs $\textsc{Rej}$ if $2^{-\textsc{Depth}\left(f^\prime,x^\prime\right)}\leq\frac\delta s$ and $\textsc{Acc}$ otherwise. This suffices, since evaluating $\textsc{Depth}\left(f^\prime,x^\prime\right)$ takes the same time as evaluating $f^\prime\left(x^\prime\right)$, up to a constant factor.

To see this, given $f\in\mathcal F$ and $x\in\mathcal X$, write
\[\textsc{Leaf}\left(f,x\right)=\left\{x^\prime\in\mathcal X\;\middle|\;\text{$x^\prime$ and $x$ reach the same leaf of $f$}\right\},\]
and observe that
\[\mathbb P_{\mathcal D}\left(\textsc{Leaf}\left(f,x\right)\right)=2^{-\textsc{Depth}\left(f,x\right)},\]
where we have written $\mathbb P_{\mathcal D}\left(S\right)$ as shorthand for $\mathbb P_{x\sim\mathcal D}\left(x\in S\right)$.
The key claim is that, for any $f\in\mathcal F$ and any $x^\ast\in\mathcal X$, if $f^\ast\in\mathcal F$ is $\varepsilon$-valid, then
\[2^{-\textsc{Depth}\left(f,x^\ast\right)-\textsc{Depth}\left(f^\ast,x^\ast\right)}\leq\mathbb P_{\mathcal D}\left(\textsc{Leaf}\left(f,x^\ast\right)\cap\textsc{Leaf}\left(f^\ast,x^\ast\right)\right)\leq\varepsilon.\]
The first inequality is a consequence of the fact that the number of distinct variables that appear when either $f$ or $f^\ast$ is evaluated at $x^\ast$ is at most $\textsc{Depth}\left(f,x^\ast\right)+\textsc{Depth}\left(f^\ast,x^\ast\right)$, and makes essential use of the fact that $\mathcal D$ is uniform (or at the very least, a product distribution). The second inequality follows from the definition of $\varepsilon$-valid.

Now choose $f$, $x^\ast$, $f^\ast$ and $x$ as in the backdoor detection game. Suppose first that $\left(f,x\right)$ is passed to the detection strategy. Then with probability at least $1-\delta$, we have $2^{-\textsc{Depth}\left(f,x\right)}=\mathbb P_{\mathcal D}\left(\textsc{Leaf}\left(f,x\right)\right)>\frac\delta s$, since $f$ has at most $s$ leaves. Hence the detection strategy outputs $\textsc{Acc}$ with probability at least $1-\delta$. Now suppose instead that $\left(f^\ast,x^\ast\right)$ is passed to the detection strategy. Then with probability at least $1-\delta$, we have $2^{-\textsc{Depth}\left(f,x^\ast\right)}>\frac\delta s$, by a similar argument. But if $f^\ast$ is $\varepsilon$-valid, then this implies that $2^{-\textsc{Depth}\left(f^\ast,x^\ast\right)}<\nicefrac\varepsilon{\frac\delta s}<\frac\delta s$, by the key claim. Hence the detection strategy outputs $\textsc{Rej}$ with probability at least $1-\delta$. So the detection strategy's overall win probability is at least $1-\delta$, as required.
\end{proof}

\end{document}